\documentclass{article}

\usepackage{microtype}
\usepackage{graphicx}
\usepackage{subcaption}
\usepackage{booktabs} 
\usepackage[acronym]{glossaries}
\usepackage[inline]{enumitem}

\usepackage{tikz}
\usepackage{xparse}
\usepackage{expl3}
\usetikzlibrary{decorations.markings}
\usetikzlibrary{decorations.markings,arrows.meta,positioning,decorations.pathreplacing}
\newcommand{\TreeEndMathWrapper}{}

\newenvironment{tree}
  {%
    \ifmmode
      \def\TreeEndMathWrapper{\egroup\egroup\egroup}%
      \mathord\bgroup\vcenter\bgroup\hbox\bgroup
    \else
      \def\TreeEndMathWrapper{}%
    \fi
    \begin{tikzpicture}[
      every node/.style={
        circle,
        draw,
        fill,
        minimum size=3pt,
        inner sep=0pt,
        outer sep=0pt
      },
      line cap=round,
      baseline=(current bounding box.center)
    ]%
  }
  {%
    \end{tikzpicture}%
    \TreeEndMathWrapper
  }

\newcommand{\mynode}[1]{node[label=right:{\tiny #1}]{}}

\newcommand{\CutBarHalfWidth}{2pt}
\newcommand{\CutBarLineWidth}{0.4pt}

\tikzset{
  normaledge/.style={
    edge from parent path={
      (\tikzparentnode) -- (\tikzchildnode)
    }
  },
  cut/.style={
    edge from parent path={
      (\tikzparentnode) -- (\tikzchildnode)
      node[
        pos=.5,
        draw=none,
        fill=none,
        shape=rectangle,
        minimum size=0pt,
        inner sep=0pt,
        outer sep=0pt
      ] {%
        \tikz \draw[line width=\CutBarLineWidth]
          (-\CutBarHalfWidth,0) -- (\CutBarHalfWidth,0);
      }
    },
    every child/.style={normaledge} 
  }
}

\ExplSyntaxOn
\clist_new:N \l__tree_cut_clist

\keys_define:nn { tree }
  {
    cut .code:n    = { \clist_set:Nn \l__tree_cut_clist { #1 } },
    cut .initial:n = ,
  }

\cs_new_protected:Npn \tree_set_opts:n #1
  { \keys_set:nn { tree } { cut=, #1 } }

\cs_new_protected:Npn \tree_set_edge_style:nN #1 #2
  {
    \clist_if_in:NnTF \l__tree_cut_clist { #1 }
      { \cs_set:Npn #2 { cut } }
      { \cs_set:Npn #2 { } }
  }
\ExplSyntaxOff

\NewDocumentCommand{\TreeSetOpts}{m}{\csname tree_set_opts:n\endcsname{#1}}
\NewDocumentCommand{\TreeSetEdgeStyle}{m m}{\csname tree_set_edge_style:nN\endcsname{#1}#2}

\NewDocumentCommand{\tomato}{O{} m}{
    \begingroup
        \TreeSetOpts{#1}%
        \begin{tree}
            \draw (0,.1)\mynode{{#2}};
        \end{tree}%
    \endgroup
}

\NewDocumentCommand{\ladderTwo}{O{} m m}{
    \begingroup
        \TreeSetOpts{#1}%
        \TreeSetEdgeStyle{#2#3}{\TreeEdgeA}%
        \begin{tree}
            \node[label=right:{\tiny #2}]{}[grow'=up, level distance = 2ex, sibling distance = 2.5ex]
                child[\TreeEdgeA]{node [label=right:{\tiny #3}] {}};
        \end{tree}%
    \endgroup
}

\NewDocumentCommand{\ladderThree}{O{} m m m}{
    \begingroup
        \TreeSetOpts{#1}%
        \TreeSetEdgeStyle{#2#3}{\TreeEdgeA}%
        \TreeSetEdgeStyle{#3#4}{\TreeEdgeB}%
        \begin{tree}
            \node[label=right:{\tiny #2}]{}[grow'=up, level distance = 2ex, sibling distance = 2.5ex]
                child[\TreeEdgeA]{node [label=right:{\tiny #3}] {}
                    child[\TreeEdgeB]{node [label=right:{\tiny #4}]{}}
                };
        \end{tree}%
    \endgroup
}

\NewDocumentCommand{\cherry}{O{} m m m}{
    \begingroup
        \TreeSetOpts{#1}%
        \TreeSetEdgeStyle{#2#3}{\TreeEdgeA}%
        \TreeSetEdgeStyle{#2#4}{\TreeEdgeB}%
        \begin{tree}
            \node[label=right:{\tiny #2}]{}[grow'=up, level distance = 2ex, sibling distance = 2.5ex]
                child[\TreeEdgeA]{node [label=right:{\tiny #3}] {}}
                child[\TreeEdgeB]{node [label=right:{\tiny #4}] {}};
        \end{tree}%
    \endgroup
}

\NewDocumentCommand{\rightTree}{O{} m m m m}{
    \begingroup
        \TreeSetOpts{#1}%
        \TreeSetEdgeStyle{#2#3}{\TreeEdgeA}%
        \TreeSetEdgeStyle{#2#4}{\TreeEdgeB}%
        \TreeSetEdgeStyle{#4#5}{\TreeEdgeC}%
        \begin{tree}
            \node[label=right:{\tiny #2}]{}[grow'=up, level distance = 2ex, sibling distance = 2.5ex]
                child[\TreeEdgeA]{node [label=right:{\tiny #3}] {}}
                child[\TreeEdgeB]{node [label=right:{\tiny #4}] {}
                    child[\TreeEdgeC]{node [label=right:{\tiny #5}] {}}
                };
        \end{tree}%
    \endgroup
}

\NewDocumentCommand{\leftTree}{O{} m m m m}{
    \begingroup
        \TreeSetOpts{#1}%
        \TreeSetEdgeStyle{#2#3}{\TreeEdgeA}%
        \TreeSetEdgeStyle{#2#4}{\TreeEdgeB}%
        \TreeSetEdgeStyle{#3#5}{\TreeEdgeC}%
        \begin{tree}
            \node[label=right:{\tiny #2}]{}[grow'=up, level distance = 2ex, sibling distance = 2.5ex]
                child[\TreeEdgeA]{node [label=right:{\tiny #3}] {}
                    child[\TreeEdgeC]{node [label=right:{\tiny #5}] {}}
                }
                child[\TreeEdgeB]{node [label=right:{\tiny #4}] {}};
        \end{tree}%
    \endgroup
}

\NewDocumentCommand{\Ytree}{O{} m m m m}{
    \begingroup
        \TreeSetOpts{#1}%
        \TreeSetEdgeStyle{#2#3}{\TreeEdgeA}%
        \TreeSetEdgeStyle{#3#4}{\TreeEdgeB}%
        \TreeSetEdgeStyle{#3#5}{\TreeEdgeC}%
        \begin{tree}
            \node[label=right:{\tiny #2}]{}[grow'=up, level distance = 2ex, sibling distance = 2.5ex]
                child[\TreeEdgeA]{node [label=right:{\tiny #3}] {}
                    child[\TreeEdgeB]{node [label=right:{\tiny #4}] {}}
                    child[\TreeEdgeC]{node [label=right:{\tiny #5}] {}}
                };
        \end{tree}%
    \endgroup
}

\NewDocumentCommand{\bushThree}{O{} m m m m}{
    \begingroup
        \TreeSetOpts{#1}%
        \TreeSetEdgeStyle{#2#3}{\TreeEdgeA}%
        \TreeSetEdgeStyle{#2#4}{\TreeEdgeB}%
        \TreeSetEdgeStyle{#2#5}{\TreeEdgeC}%
        \begin{tree}
            \node[label=right:{\tiny #2}]{}[grow'=up, level distance = 2ex, sibling distance = 2.5ex]
                child[\TreeEdgeA]{node [label=right:{\tiny #3}] {}}
                child[\TreeEdgeB]{node [label=right:{\tiny #4}] {}}
                child[\TreeEdgeC]{node [label=right:{\tiny #5}] {}};
        \end{tree}%
    \endgroup
}

\usepackage{tikz}
\usepackage{tikz-cd}
\usepackage{multirow}
\usepackage{xcolor}
\PassOptionsToPackage{hyphens}{url}
\usepackage{xurl}

\usepackage[
  colorlinks=true,
  linkcolor=red!70!black,
  linktocpage=false,
  citebordercolor=blue!70!black,
  citecolor=blue!70!black,
  anchorcolor=blue!70!black,
  bookmarks=true,
  bookmarksopen=true,
  bookmarksnumbered=true,
  pdfpagemode=UseOutlines
]{hyperref}

\usepackage{bookmark}

\DeclareMathOperator{\hopfshuf}{\mathcal{H}_\shuffle}

\DeclareMathOperator{\hopfmkw}{\mathcal{H}_\mathrm{MKW}}

\DeclareMathOperator{\hopfgl}{\mathcal{H}_\mathrm{GL}}

\newacronym{sota}{SOTA}{state-of-the-art}
\newacronym{ad}{AD}{auto-differentiation}

\newacronym{lj}{LJ}{Lennard-Jones}

\newacronym{fbm}{fBM}{fractional brownian motion}
\newacronym{rl}{RL}{Riemann-Liouville}
\newacronym{ou}{OU}{Ornstein-Uhlenbeck}

\newacronym{rbergomi}{rBergomi}{rough Bergomi}

\newacronym{rge}{RGE}{rotational geodesic error}
\newacronym{mse}{MSE}{mean squared error}
\newacronym{mmd}{MMD}{maximum mean discrepancy}

\newacronym{ode}{ODE}{ordinary differential equation}
\newacronym{pde}{PDE}{partial differential equation}
\newacronym{sde}{SDE}{stochastic differential equation}
\newacronym{cde}{CDE}{controlled differential equation}
\newacronym{yde}{YDE}{Young differential equation}
\newacronym{rde}{RDE}{rough differential equation}
\newacronym{rsde}{RSDE}{rough stochastic differential equation}
\newacronym{spde}{SPDE}{stochastic partial differential equation}

\newacronym{rk}{RK}{Runge--Kutta}
\newacronym{rkmk}{RKMK}{Runge--Kutta--Munthe--Kaas}
\newacronym{cg}{CG}{Crouch--Grossman}
\newacronym{cf}{CF}{Commutator free}
\newacronym{ees}{EES}{Explicit and Effectively Symmetric}

\newacronym{sg}{SG}{Savitzky--Golay}

\newacronym{node}{NODE}{neural ODE}
\newacronym{mnode}{M-NODE}{manifold neural ODE}
\newacronym{ncde}{NCDE}{neural controlled differential equation}
\newacronym{nrde}{NRDE}{neural rough differential equation}
\newacronym{bnrde}{B-NRDE}{branched neural rough differential equation}

\newacronym{nspde}{NSPDE}{neural stochastic partial differential equation}
\newacronym{dlrnet}{DLR-Net}{Deep Latent Regularity Net}
\newacronym{nors}{NORS}{Neural Operator with Regularity Structure}

\newacronym{gl}{GL}{Grossman--Larson}
\newacronym{bck}{BCK}{Butcher--Connes--Kreimer}
\newacronym{mkw}{MKW}{Munthe--Kaas--Wright}

\newacronym{smiles}{SMILES}{Simplified Molecular-input Line-entry System}
\newacronym{dft}{DFT}{density functional theory}
\newacronym{md}{MD}{molecular dynamics}

\newacronym{spdgraph}{SPD}{shorest path distance}

\newacronym{spdman}{SPD}{symmetric positive definite}
\newacronym{aiman}{AI}{affine invariant}
\newacronym{bw}{BW}{Bures-Wasserstein}

\newacronym{mlp}{MLP}{multilayer perceptron}
\newacronym{nlp}{NLP}{natural language processing}
\newacronym{llm}{LLM}{large language models}
\newacronym{ffn}{FFN}{feedforward neural network}
\newacronym{cnn}{CNN}{convolutional neural network}
\newacronym{mpnn}{MPNN}{message passing neural network}

\newacronym{mt}{MT}{multitask}
\newacronym{zs}{ZS}{zero-shot}

\newacronym{relu}{ReLU}{rectified linear unit}
\newacronym{gelu}{GELU}{Gaussian error linear unit}
\newacronym{silu}{SiLU}{sigmoid linear unit}

\newacronym{gnn}{GNN}{graph neural network}
\newacronym{gtn}{GTN}{graph transformer neural network}
\newacronym{egnn}{EGNN}{Equivariant Graph Neural Network}
\newacronym{gat}{GAT}{graph attention network}
\newacronym{gatv2}{GATv2}{graph attention network v2}

\newacronym{mha}{MHA}{multi-headed scaled dot-product attention}
\newacronym{rope}{RoPE}{Rotary Position Embedding}
\newacronym{trope}{T-RoPE}{Temporal Rotary Position Embedding}
\newacronym{nope}{NoPE}{no positional encoding}

\newacronym{rwpe}{RWPE}{random walk positional encoding}
\newacronym{rrwp}{RRWP}{relative random walk probabilities}

\newacronym{no}{NO}{neural operator}
\newacronym{atoms}{ATOM}{Atomistic Transformer Operator for Molecules}
\newacronym{fno}{FNO}{Fourier Neural Operator}
\newacronym{egno}{EGNO}{Equivariant Graph Neural Operator}
\newacronym{gnot}{GNOT}{General Neural Operator Transformer}
\newacronym{hnca}{HNCA}{Heterogenous Normalised Cross-Attention}

\newacronym{ks}{KS}{Kolmogorov-Smirnov}

\newacronym{omd}{OMD}{Oxford Multimotion Dataset}
 
\glsdisablehyper

\usepackage[accepted]{icml2026}

\usepackage{amsmath}
\usepackage{amssymb}
\usepackage{mathtools}
\usepackage{amsthm}
\usepackage{dirtytalk}
\usepackage{shuffle}
\usepackage{tabularray}
\usepackage{stmaryrd}  

\usepackage{xparse}

\makeatletter
\let\oldsection\section
\newcounter{autopdfbookmarksection}

\RenewDocumentCommand{\section}{s o m}{%
  \stepcounter{autopdfbookmarksection}%
  \phantomsection
  \IfNoValueTF{#2}
    {\pdfbookmark[1]{#3}{bookmark:section:\theautopdfbookmarksection}}
    {\pdfbookmark[1]{#2}{bookmark:section:\theautopdfbookmarksection}}%
  \IfBooleanTF{#1}
    {\oldsection*{#3}}
    {%
      \IfNoValueTF{#2}
        {\oldsection{#3}}
        {\oldsection[#2]{#3}}%
    }%
}
\makeatother

\makeatletter
\let\oldsubsection\subsection
\newcounter{autopdfbookmarksubsection}

\RenewDocumentCommand{\subsection}{s o m}{%
  \stepcounter{autopdfbookmarksubsection}%
  \phantomsection
  \IfNoValueTF{#2}
    {\pdfbookmark[2]{#3}{bookmark:subsection:\theautopdfbookmarksubsection}}
    {\pdfbookmark[2]{#2}{bookmark:subsection:\theautopdfbookmarksubsection}}%
  \IfBooleanTF{#1}
    {\oldsubsection*{#3}}
    {%
      \IfNoValueTF{#2}
        {\oldsubsection{#3}}
        {\oldsubsection[#2]{#3}}%
    }%
}
\makeatother

\usepackage[capitalize,noabbrev,nameinlink]{cleveref}
\crefname{definition}{Definition}{Definitions}
\crefname{theorem}{Theorem}{Theorem}
\AddToHook{cmd/appendix/before}{%
  \crefalias{section}{appendix}%
  \crefalias{subsection}{appendix}%
}
\usepackage{import}
\usepackage{xifthen}
\usepackage{pdfpages}
\usepackage{transparent}

\theoremstyle{plain}
\newtheorem{theorem}{Theorem}[section]
\newtheorem{proposition}[theorem]{Proposition}

\theoremstyle{definition}
\newtheorem{definition}[theorem]{Definition}

\theoremstyle{remark}
\newtheorem{remark}[theorem]{Remark}
\theoremstyle{example}
\newtheorem{example}[theorem]{Example}

\usepackage[textsize=tiny]{todonotes}

\usepackage{pifont}
\newcommand{\cmark}{\ding{51}}%
\newcommand{\xmark}{\ding{55}}%

\icmltitlerunning{Branched Neural Rough Differential Equations}

\begin{document}

\def\TreeOneTwo#1#2#3{\tikz[baseline = -0.3ex]{
		\draw[fill] (0,0) circle [radius=0.06];
		\draw[fill] (-0.2,0.35) circle [radius=0.06];
		\draw[fill] (0.2,0.35) circle [radius=0.06];
		\draw (0,0) -- (-0.2,0.35);
		\draw (0,0) -- (0.2,0.35);
		\node[right] at (0,0) {$\scriptstyle{#1}$};
		\node[above] at (-0.2,0.35) {$\scriptstyle{#2}$};
		\node[above] at (0.2,0.35) {$\scriptstyle{#3}$};}}

\twocolumn[
  \icmltitle{\texorpdfstring{Learning Manifold and It\^o Dynamics with\\Branched$^{\mkern-9mu\TreeOneTwo{}{}{}}$ Neural Rough Differential Equations}{Learning Manifold and It\^o Dynamics with Branched Neural Rough Differential Equations}}



  \icmlsetsymbol{equal}{*}

  \begin{icmlauthorlist}
    \icmlauthor{Luke Thompson}{equal,usyd}
    \icmlauthor{Dai Shi}{equal,cambridge}
    \icmlauthor{Lequan Lin}{usyd}
    \icmlauthor{Junbin Gao}{usyd}
    \icmlauthor{Andi Han}{usyd}
  \end{icmlauthorlist}
  \icmlaffiliation{usyd}{University of Sydney, Australia}
  \icmlaffiliation{cambridge}{Cambridge University, United Kingdom}
  
  \icmlcorrespondingauthor{Luke Thompson}{luke-a-thompson@outlook.com}  
  \icmlcorrespondingauthor{Andi Han}{andi.han@sydney.edu.au}

  \icmlkeywords{Dynamical systems, Manifold learning, Differential equations}

  \vskip 0.3in
    ]



\printAffiliationsAndNotice{}  


\begin{abstract}

Neural rough differential equations (NRDEs) stay accurate under irregular sampling while taking far fewer integration steps than standard neural differential equations, summarising a finely sampled driver by its log-signature and advancing the hidden state over coarse intervals using the log-ODE method. This efficiency rests on the shuffle algebra, the algebraic counterpart of Stratonovich calculus. This reliance means NRDEs cannot expose the quadratic-variation terms It\^o dynamics require, nor the ordered covariant derivatives that govern It\^o flows on connection-equipped manifolds. Ameliorating this, we introduce Branched Neural Rough Differential Equations (B-NRDEs), a Hopf-algebraic framework that recasts the NRDE log-ODE step as geometric numerical integration on the state-space manifold, matching the driving algebra to the governing calculus: Grossman--Larson rooted trees for Euclidean It\^o dynamics, Munthe--Kaas--Wright planar rooted trees for ordered covariant derivatives on manifolds, and the shuffle algebra in the classical Stratonovich case. This yields intrinsic coarse-step dynamics that exactly preserve manifold constraints. Finally, we introduce a branched signature-kernel objective to enable It\^o-consistent law matching by making quadratic variation terms visible during training. On rough Bergomi volatility, sim-to-real $\mathrm{SO}(3)$ forecasting, and SPD covariance dynamics, B-NRDEs offer a unified, effective approach to stochastic and manifold-valued dynamics beyond the Euclidean-Stratonovich setting.
\end{abstract}

\section{Introduction}
\glsresetall
Learning continuous-time dynamics from time series is a foundational problem in machine learning, with applications ranging from robotics \cite{duong_hamiltonian-based_2021, chee_knode-mpc_2022} and molecular dynamics \cite{huang_generalizing_2023, schreiner_implicit_2023} to quantitative finance \cite{gierjatowicz_robust_2020, issa_non-adversarial_2023}. \Glspl{ncde} \cite{kidger_neural_2020} approach this task by parameterising the vector field of a controlled differential equation with a neural network. To improve scalability and robustness to sampling rates, \glspl{nrde} \cite{morrill_neural_2021} lift the control path to its log-signature, a series of iterated integrals, and solve the resulting system via the log-ODE method at coarser discretisations \cite{castell_ordinary_1996}.

\begin{figure}[t]
    \centering
    \includegraphics[width=\linewidth]{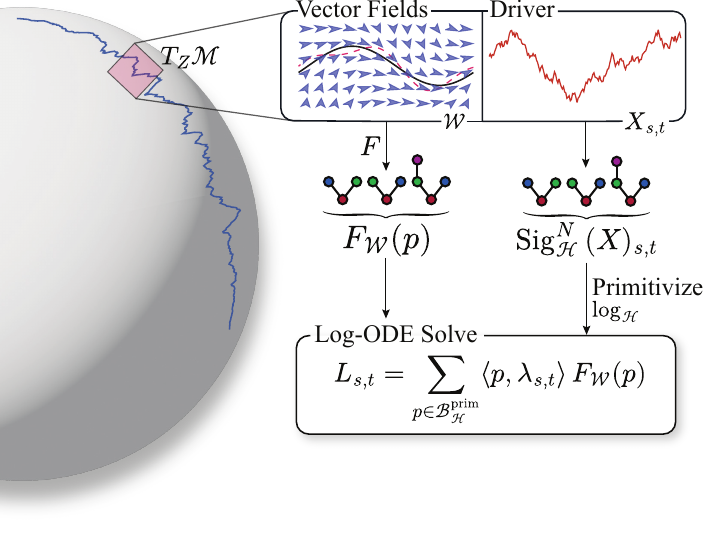}
    \vspace{-2em}
    \caption{Illustration of the log-ODE method for \glspl{bnrde}. The control segment $X_{s,t}$ is lifted to $\mathbf{X}^\mathcal{H}_{s,t} \in \mathcal{H}$ for the chosen Hopf algebra $\mathcal{H} \in \{\hopfshuf, \hopfgl, \hopfmkw\}$, then transformed to a log-signature $\lambda_{s,t}$ in primitive coordinates $p \in \mathcal{B}_\mathcal{H}^\mathrm{prim}$. The tangent vector fields $\mathcal{W}$ are lifted through the pseudo bialgebra map $F_{\mathcal{W}}$ to matching primitive-indexed vector fields on the state manifold. These are combined into the log-ODE vector field $L_{s,t}$, whose ODE flow gives the local update from $s$ to $t$.}    \label{fig:main_fig}
    \vspace{-2em}
\end{figure}

However, geometric \glspl{nrde} are driven by geometric signatures whose coordinates satisfy the shuffle product identity. This is appropriate for Stratonovich integration, which preserves the usual chain rule, but not for It\^o integration. It\^o's product rule generates quadratic-variation corrections, so products of It\^o iterated integrals contain second-order terms that are not represented by the original tensor coordinates. This limits the use of geometric signatures in settings where adapted causal It\^o modelling assumptions are required. On manifolds, the obstruction is sharper. It\^o integration is defined relative to a connection, and its expansion involves higher covariant derivatives. Since these operators do not generally commute, the signature algebra must preserve their ordered, branched composition rather than identify them through shuffle relations.

To address these limitations, we introduce \glspl{bnrde}, a unified framework for learning dynamics using the log-ODE method under It\^o integration and over manifolds. By lifting the control path to a Hopf algebra of \textit{rooted trees} --- the \acrlong{gl} algebra for Euclidean It\^o calculus or the \acrlong{mkw} algebra for manifolds, we derive a generalised \gls{nrde} that strictly respects the geometric and causal constraints of the domain. See \cref{tab:hopf_summary} for comparisons of Hopf algebras and \cref{fig:main_fig} for an illustration of the proposed framework.
Our \textbf{main contributions} are as follows:
\begin{enumerate}[leftmargin=0.2in]

    \item We model path signatures in the \acrlong{gl} Hopf algebra \(\mathcal{H}_{\text{GL}}\), whose rooted-tree basis represents It\^o-type iterated integrals. We then introduce the first neural dynamics model utilising the \acrlong{mkw} Hopf algebra $\mathcal{H}_{\text{MKW}}$ of planar rooted trees to strictly enforce manifold constraints for It\^o-type dynamical systems. 
    
    \item We unify the above approaches via \textit{pseudo bialgebra maps} \cite{kern_flow_2023}, which convert Hopf algebraic structures to learned vector fields and differential operators on the target manifold. This yields a general log-ODE formulation for Stratonovich and It\^o dynamics on connection-equipped manifolds. In this work, we instantiate the numerical solver on homogeneous spaces, with tangent vectors represented in frame or Lie-algebra coordinates and integrated through a group action.

    \item We enable It\^{o}-consistent neural dynamics training both in Euclidean space and on manifolds by developing a branched signature-kernel objective.
    
    \item We release \textit{\href{https://github.com/luke-a-thompson/Roughrax}{Roughrax}}, a JAX package for branched rough paths, and autodifferentiable numerical solution for \glspl{rde} over manifolds.
\end{enumerate}

\section{Preliminaries}
\label{sec:math_prelims}
In this section, we first review the standard formulation of Neural CDEs and RDEs to establish the Euclidean baseline (\Cref{sec:ncde_nrde}). We then discuss the machinery required to extend the Log-ODE framework to non-Euclidean and It\^o settings. We first introduce rough path Hopf algebras (\Cref{sec:prelim_rough_path_hopf_algebras}), then present pseudo bialgebra maps that send algebra elements to differential operators on $\mathcal M$ (\Cref{sec:prelim_vf_pseudo_bialgebra}). 

\subsection{Neural Controlled and Rough Differential Equations}
\label{sec:ncde_nrde}

Let $\big((t_0, x_0), \dots, (t_n, x_n) \big)$ be a potentially irregular time series with observations $x_i \in \mathbb{R}^d$. We construct a continuous interpolation $X \colon [t_0, t_n] \to \mathbb{R}^d$ such that $X_{t_i} = x_i$.

\textbf{Neural Controlled Differential Equations.}
A \gls{ncde} \cite{kidger_neural_2020} evolves a hidden state $h_t \in \mathbb{R}^u$ driven by the path $X$. Let $\xi_\phi: \mathbb{R}^d \to \mathbb{R}^u$, $\ell_\psi: \mathbb{R}^u \to \mathbb{R}^w$  be an encoder and output network respectively, and $g_\theta: \mathbb{R}^u \to \mathbb{R}^{u \times d}$ be a vector field parametrisation. The forward pass is defined by the Riemann-Stieltjes integral:
\begin{align}
    h_t = h_{t_0} + \int_{t_0}^{t} g_\theta(h_s) \, \mathrm{d}X_s,  \label{eq:ncde}
\end{align}
where initial hidden state $h_{t_0} = \xi_\phi(X_{t_0})$, and output $y_t = \ell_\psi(h_t)$, $g_\theta(h_s)\,\mathrm{d}X_s$ denotes matrix-vector multiplication. This formulation allows the latent state to react continuously to incoming data.

\textbf{Neural Rough Differential Equations.}
While highly expressive, \glspl{ncde} are expensive for long sequences. \Glspl{nrde} \cite{morrill_neural_2021} reduce this cost by applying the log-ODE method: over each window \(I_j=[t_j,t_{j+1}]\), the path \(X\) is summarised by its log-signature \(\lambda_j\). This log-signature determines an autonomous vector field, denoted \(\tilde g_{\theta,\lambda_j}\), obtained by applying the log-ODE lift to the base neural vector field. The hidden state is then advanced by the \gls{ode}
\begin{equation}
    \label{eq:nrde}
    h_t
    =
    h_{t_j}
    +
    \int_{t_j}^{t}
    \tilde g_{\theta,\lambda_j}(h_s)\,\mathrm ds,
    \qquad t\in I_j .
\end{equation}
Here, \(\lambda_j\) sets the coefficients of the ODE solved on \(I_j\), rather than serving as a new path differential, yielding a higher-order approximation that permits larger step sizes.

\subsection{Rough Path Hopf Algebras}
\label{sec:prelim_rough_path_hopf_algebras}
Hopf algebras organise iterated-integral combinatorics and the product identities needed for computation. \cref{tab:hopf_summary} summarises the regimes supported by each Hopf algebra.

\begin{definition}[Hopf algebra and primitives]\label[definition]{def:hopf_algebra}
Let \(\mathcal{H}=(H,\star,\eta,\Delta,\varepsilon,S)\) be a Hopf algebra on a vector space \(H\) with product \(\star:H\otimes H\to H\) and coproduct \(\Delta:H\to H\otimes H\); we denote a basis of \(H\) by \(\mathcal{B}_{\mathcal H}\), and write \(\mathrm{Prim}(\mathcal H)\coloneqq \{p\in H:\Delta p=p\otimes 1+1\otimes p\}\) with a chosen primitive basis \(\mathcal{B}_{\mathcal H}^{\mathrm{prim}}\subset \mathrm{Prim}(\mathcal H)\). We suppress \(\eta\), \(\varepsilon\) and \(S\) below, since only \(\star\) and \(\Delta\) are immediately relevant to our constructions.
\end{definition}

\begin{table}[H]
\centering
\caption{Summary of the integration regimes encoded by each Hopf algebra and their first application to neural differential equations.}
\label{tab:hopf_summary}
\setlength{\tabcolsep}{6pt}
\renewcommand{\arraystretch}{1.15}
\resizebox{\linewidth}{!}{
\begin{tblr}{
  colspec = {l c c c},
  row{1} = {font=\bfseries},
  cell{1}{1} = {l},
  cell{1}{2} = {c},
  cell{1}{3} = {c},
  cell{1}{4} = {c},
  hline{1,Z} = {-}{0.08em},
  hline{2}   = {-}{0.06em},
}
Hopf algebra  & Stratonovich & Euclidean It\^o & Manifold It\^o \\
$\hopfshuf$  \citep{morrill_neural_2021} & \cmark & \xmark & \xmark \\
$\hopfgl$ (this work)                     & \xmark & \cmark & \xmark \\
$\hopfmkw$  (this work)                   & \xmark & \cmark & \cmark \\
\end{tblr}
}
\end{table}
\vspace{-0.5em}
\paragraph{Stratonovich (geometric) signatures and $\mathcal H_{\shuffle}$.}
For a $d$-dimensional control path $X=(X^{(1)},\dots, X^{(d)})$, the truncated geometric signature collects its iterated Stratonovich integrals in word coordinates indexed by $e_{i_1}\otimes\cdots\otimes e_{i_m}$. We use the standard tensor-algebra convention in which signatures are group-like, and log-signatures are computed using the tensor product; the shuffle product appears dually as the product identity satisfied by signature coordinate functions \cite{kidger_neural_2020,morrill_neural_2021}. The defining feature of this setting is the Stratonovich product rule. At level two, we have
\begin{equation}
    \label{eq:shuffle_strat}
    X_{s,t}^{(i)} X_{s,t}^{(j)}
    =
    \int_s^t X_{s,u}^{(i)} \circ \mathrm{d}X_u^{(j)}
    +
    \int_s^t X_{s,u}^{(j)} \circ \mathrm{d}X_u^{(i)}.
\end{equation}
Algebraically, \eqref{eq:shuffle_strat} is encoded by \(e_i \shuffle e_j = e_i \otimes e_j + e_j \otimes e_i\). Thus \(\mathcal H_{\shuffle}\) naturally represents geometric (Stratonovich) rough paths: iterated integrals are indexed by tensor coordinates and obey shuffle identities. Since Stratonovich integration is coordinate-free, the geometric signature remains appropriate for Stratonovich dynamics on manifolds.

\paragraph{It\^o (branched) signatures and $\hopfgl$.}
Under It\^o integration, the product rule includes a quadratic-variation term:
\begin{equation}
\label{eq:gl_ito}
    \begin{aligned}
    X_{s,t}^{(i)} X_{s,t}^{(j)}
    = \int_s^t X_{s,u}^{(i)} \mathrm{d}X_u^{(j)}
     + \int_s^t &X_{s,u}^{(j)} \mathrm{d}X_u^{(i)}\\[-0.5em]
    &+ \langle X^{(i)}, X^{(j)} \rangle_{s,t}.
    \end{aligned}
\end{equation}
Over the original $d$-dimensional word coordinates, the correction term $\langle X^{(i)}, X^{(j)}\rangle$ is not exposed as an independent level-two driver coordinate. It can be represented only indirectly, for example, by augmenting the path through a lead--lag lift. Hence, the unaugmented shuffle representation is not the natural signature space for It\^o modelling.
We therefore work in the Hopf algebra of \citet{grossman_hopf-algebraic_1989}, $\hopfgl$, on rooted trees, which underpins branched rough paths \citep{gubinelli_ramification_2010}. Here, trees index the additional non-geometric iterated-integral coordinates, and in particular provide a (symmetric) second-order coordinate corresponding to the quadratic variation in \eqref{eq:gl_ito}.
\vspace{-0.5em}
\paragraph{It\^o (branched) signatures over manifolds and $\hopfmkw$.} For dynamics on a manifold $\mathcal M$ with connection $\nabla$, the local flow involves \emph{ordered} compositions of covariant derivatives: in general $\nabla_U\nabla_V \neq \nabla_V\nabla_U$, and the induced curvature terms depend on the order in which directions are applied. This order-sensitivity is captured by the \citet{munthe-kaas_hopf_2006} Hopf algebra, $\hopfmkw$ , defined over \emph{planar} rooted trees which fix a left-to-right child order at each node to index ordered iterated covariant derivatives. In contrast, non-planar rooted trees symmetrise child order and collapse the order-sensitive flow terms. See \Cref{appx:hopf_algebras_of_rough_paths} for further algebraic details. We show a visual representation of the Stratonovich and It\^o conventions in \cref{fig:riemann-stieltjes}.

\begin{figure}[h]
\centering
\begin{subfigure}[t]{0.49\linewidth}
    \centering
    \includegraphics[]{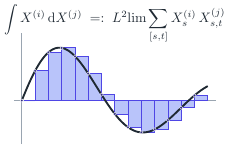}
    \caption{It\^o convention.}
    \label{fig:rs-ito}
\end{subfigure}
\begin{subfigure}[t]{0.49\linewidth}
    \centering
    \includegraphics[]{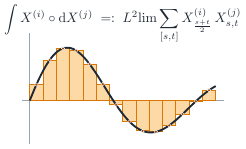}
    \caption{Stratonovich convention.}
    \label{fig:rs-stratonovich}
\end{subfigure}
\hfill

\caption{Riemann--Stieltjes approximations under different evaluation conventions. Redrawn from \citet{bellingeri_branched_2024}.}
\label{fig:riemann-stieltjes}
\vspace{-1em}
\end{figure}

\subsection{Vector Fields and Pseudo Bialgebra Maps}
\label{sec:prelim_vf_pseudo_bialgebra}

Pseudo-bialgebra maps \cite{kern_flow_2023} identify how signature coordinates act on functions through vector fields. Let $\mathcal M$ be a smooth manifold, let $\Gamma(T\mathcal M)$ denote its smooth vector fields, and let $\mathrm{Diff}(\mathcal M)$ be the algebra of linear differential operators on $C^\infty(\mathcal M)$. Given a connection $\nabla$ when covariant derivatives are required, a pseudo-bialgebra map sends each word or tree basis element of the chosen Hopf algebra to the corresponding differential operator on $\mathcal M$. Thus, the signature expansion and the induced flow expansion are evaluated in the same word- or tree-indexed coordinates.

\begin{definition}[pseudo bialgebra map \texorpdfstring{\citep[Def.~4.1]{kern_flow_2023}}{}]\label[definition]{def:pseudo_bialgebra_map}
    A linear map $F\colon \mathcal{H} \to \mathrm{Diff}(\mathcal{M})$ is a pseudo bialgebra map if
    \[
        F(\tau \star \sigma)=F(\tau)\circ F(\sigma)
    \]
    and, for all $\tau\in\mathcal H$ and $\phi,\psi\in C^\infty(\mathcal M)$,
    \(
        m\circ(F\otimes F)(\Delta\tau)(\phi\otimes\psi)
        =
        F(\tau)(\phi\psi),
    \)
    where $m(a\otimes b)=ab$ denotes pointwise multiplication of functions.
\end{definition}

Given a collection of smooth vector fields $\mathcal W = (W^{(1)}, \dots, W^{(d)})$ on $\mathcal{M}$, the elementary differential associated with a rooted tree is defined recursively. For the single-node tree \(F_{\mathcal W}(\bullet_i)=W^{(i)}\). For a rooted tree $\tau = [\tau_1 \dots \tau_k]_i$, whose root has colour $i\in\{1,\dots,d\}$ and $\tau_1,\dots,\tau_k$ are its immediate child nodes, set
\begin{equation}
    \label{eq:elementary_differentials}
    F_{\mathcal{W}}([\tau_1, \dots, \tau_k]_i) = \nabla^k_{F_\mathcal{W}(\tau_1),\dots,F_\mathcal{W}(\tau_k)} W^{(i)}.
\end{equation}

Intuitively, we apply the $k$-th covariant derivative to the base vector field $W^{(i)}$, and ask how much it moves along the directions of vector fields specified by the subtrees $\tau_1, \dots, \tau_k$. Crucially, if $\mathcal{H} = \mathcal{H}_{\text{MKW}}$, the order of arguments in the covariant derivative $\nabla^k$ corresponds to the planar order of subtrees, and the chain rule for covariant derivatives.
\section{Methodology}
\label{sec:methodology}

We define \gls{bnrde} by extending log-\gls{ncde} \cite{walker_log_2024} to non-Euclidean geometries. The model learns an algebra-valued vector field and integrates the dynamics using the log-ODE method. \Cref{fig:main_fig,alg:m_nrde_pseudocode} summarise the resulting piecewise log-ODE formulation.

\begin{algorithm}[h]
  \caption{\gls{bnrde}}
  \label{alg:m_nrde_pseudocode}
\begin{algorithmic}
\STATE {\bfseries Input:} Hopf algebra
\(\mathcal H\in\{\hopfshuf,\hopfgl,\hopfmkw\}\), truncation depth \(N\),
partition \(\{[t_k,t_{k+1}]\}_{k=0}^{M-1}\), path segments
\(X_{t_k,t_{k+1}}\), vector-field lift \(F_W\), and initial condition
\(Y_0\in\mathcal M\)

    \STATE {\bfseries Precompute:} build the depth-\(N\), \(d\)-decorated basis of \(\mathcal H\), primitive basis
    \(\mathcal P\gets\mathcal B_{\mathcal H}^{\mathrm{prim}}\), signature
    routines, Hopf logarithm, and primitive-field evaluation schedules

    \STATE {\bfseries Cache offline:} for each segment \(X_k\), compute
    \[
        \lambda_k
        =
        \log_{\mathcal H}\!\left(
            \operatorname{Sig}_{\mathcal H}^{N}(X_k)
        \right)
        =
        \sum_{p\in\mathcal P}\lambda_k^p p .
    \]

    \FOR{\(k = 0\) {\bfseries to} \(M-1\)}
        \STATE
        \(L_k(Y)\gets
        \sum_{p\in\mathcal P}\lambda_k^p F_W(p)(Y)\)
        \hfill \(\triangleright\) lifted field

        \STATE
        \(\dot Z_\tau\!=\!L_k(Z_\tau),\, Z_0\!=\!Y_k;\,
        Y_{k+1}\gets Z_1\)
        \hfill \(\triangleright\) log-ODE step
    \ENDFOR

    \STATE {\bfseries Output:} trajectory samples \(\{Y_k\}_{k=0}^{M}\)
\end{algorithmic}
\end{algorithm}

\subsection{Signature Primitivisation}
\label{sec:signature_primitivisation}
Given a path segment \(X_{s,t}\), its \(\mathcal H\)-signature is the group-like element \(\mathbb X^{\mathcal H}_{s,t}\in\mathcal H\) whose coordinates encode the corresponding iterated integrals, meaning that \(\Delta \mathbb X^{\mathcal H}_{s,t} = \mathbb X^{\mathcal H}_{s,t}\otimes \mathbb X^{\mathcal H}_{s,t}\). The log-ODE method requires expressing the signature in primitive elements via the Hopf logarithm:
\begin{equation}
\label{eq:hopf_log}
    \log_{\mathcal{H}}(g) \coloneqq \sum_{n \ge 1} \frac{(-1)^{n-1}}{n} (g - 1)^{\star n},
\end{equation}
for a group-like element $g$ where \(1\) is the unit and \(\star\) is the product in \(\mathcal{H}\). $\log_\mathcal{H}$ yields a primitive element, which we term the \(\log_\mathcal{H}\)-signature.

The product \(\star\) of the working Hopf algebra fixes the meaning of \eqref{eq:hopf_log}.
Throughout, \(\mathcal H\) denotes the graded dual in which enhanced paths are group-like and logarithms primitive, with coordinate identities such as the shuffle product rule stated dually. For \(\mathcal H_\shuffle\), \(\star\) is tensor concatenation, whose dual coordinate product is the shuffle; for \(\mathcal H_{\mathrm{GL}}\) and \(\mathcal H_{\mathrm{MKW}}\), \(\star\) is the tree grafting product (\cref{app:gl_hopf,app:mkw_hopf}). A single Hopf logarithm thus yields the segment log-signatures \(\{\lambda_j\}_{j=0}^{T-1}\) used by B-NRDE across all three regimes.

\subsection{Neural Vector Fields}
\label{subsec:neural_lift}
The learnable component of a \gls{bnrde} is a collection of atomic driving vector fields, not the full primitive-indexed log-ODE field. For \(z\in\mathcal M\), we parameterise
\[
  W_\theta(z)
  =
  (W_\theta^{(1)}(z),\dots,W_\theta^{(d)}(z)),
  \quad
  W_\theta^{(i)}(z)\in T_z\mathcal M .
\]
Thus \(\mathcal W_\theta=(W_\theta^{(1)},\dots,W_\theta^{(d)})\) contains one vector field per driver channel. B-NRDE learns only these atomic channel fields. The primitive-indexed fields used by the log-ODE are then generated deterministically from \(W_\theta\) by the vector-field lifts described in the sequel.

The display above is \textit{representation-free}: the method only requires that the
network parameterisation determines an element of \(T_z\mathcal M\) for each
state \(z\) and driver channel. In our experiments, we use a homogeneous space realisation, in which the network outputs frame coordinates and the numerical flow is applied through the corresponding group action. This enforces the manifold constraint at every solver substep, avoiding the ambient-space projection or retraction errors that arise when integrating extrinsically.



\subsection{Vector Field Lifts}
\label{sec:vector_field_lifts}
We implement the pseudo bialgebra map as a \emph{vector field lift} $F_W\colon\mathcal B_{\mathcal H}^{\mathrm{prim}}\to\Gamma(TM)$ mapping primitive basis elements of the chosen Hopf algebra to vector fields on the state space. This follows from the coproduct Leibniz property of \cref{def:pseudo_bialgebra_map}, which ensures $F_\mathcal{W}(p)\in\Gamma(TM)$ for $p\in\mathcal B_{\mathcal H}^{\mathrm{prim}}$ \citep[Prop.~4.3]{kern_flow_2023}; see \Cref{appx:prop_primitive_vector_field}. In all cases, the implementation separates basis and combinatorial precomputation from \acrlong{ad}-based evaluation. The atomic fields are those defined in \Cref{subsec:neural_lift}, and we write $W \coloneqq \mathcal{W}_\theta,\ F_W \coloneqq F_{\mathcal{W}_\theta}$ for notational simplicity.

\textbf{Primitive bases and combinatorics.}
\emph{Words} ($\hopfshuf$): We use the Lyndon basis of the free Lie algebra, indexed by Lyndon words $\ell=(i_1,\dots,i_m)$ over $\{1,\dots,d\}$ with $|\ell|\leq N$; see \cref{def:lyndon_word}. We enumerate these words by degree using Duval's generator \citep{duval_generation_1988}. For each non-letter Lyndon word, we store its standard factorisation
\begin{equation}
\label{eq:standard_lyndon_factorisation}
\ell = uv, \qquad v \text{ the longest Lyndon suffix},
\end{equation}
as the pair of indices corresponding to $u$ and $v$.

\emph{Trees} ($\hopfgl,\hopfmkw$): primitives are represented by decorated rooted trees, stored recursively as
\(
(\tau_1,\ldots,\tau_k,c),
\)
where $c\in\{1,\dots,d\}$ is the root colour and $\tau_1,\ldots,\tau_k$ are the root subtrees. We precompute the indices of these subtrees, giving the recursive evaluation schedule shared by both tree algebras. For $\hopfgl$, unordered rooted-tree shapes are enumerated in \citet{beyer_constant_1980} lexicographic order, with colourings identified under tree symmetries. For $\hopfmkw$, ordered rooted-tree shapes are generated recursively from ordered compositions of the remaining node count; the left-to-right order of children is part of the basis element, and all colourings are distinct.

\textbf{Shared differential-operator scheme.}
The lifted fields are defined intrinsically from vector fields on
\(\mathcal M\). For \(\hopfshuf\), Lyndon primitives are evaluated using the
ordinary Lie bracket of vector fields, while for \(\hopfgl\) and \(\hopfmkw\), tree primitives are evaluated using
iterated covariant derivatives with respect to the chosen connection
\(\nabla\). In the numerical implementation, these intrinsic operations are
evaluated by forward-mode automatic differentiation in the chosen coordinate
or frame representation.

\textbf{Evaluation: shuffle primitives (bracket recursion).}
Given atomic channel fields \(W^{(1)},\dots,W^{(d)}\), we set
\[
F_W(\ell)
=
\begin{cases}
W^{(i)}, & \ell=i,\\
[F_W(u),F_W(v)], & \ell=[u,v],
\end{cases}
\]
and the recursion terminates when the word is a single letter.

\textbf{Evaluation: tree primitives (multi-ary node recursion).}
Fix a coloured rooted tree $p=(\tau,c)$. For each node $v$ of $\tau$, let $C(v)=(u_1,\dots,u_k)$ denote its ordered children (intrinsic for $\hopfmkw$; any fixed order for $\hopfgl$). We compute subtree fields $V_v$ in postorder. Leaves satisfy \(V_v=W^{(c(v))}\). If $v$ has ordered children $C(v)=(u_1,\dots,u_k)$, we set
\[
V_v
=
\nabla^k_{V_{u_1},\dots,V_{u_k}} W^{c(v)}.
\]
Here $\nabla^k$ is the iterated covariant derivative, evaluated recursively for arbitrary vector fields \(U_1,\dots,U_k,W\in\Gamma(T\mathcal M)\):
\begin{align*}
\nabla^k_{U_1,\dots,U_k}W
=
\nabla_{U_1}&\left(\nabla^{k-1}_{U_2,\dots,U_k}W\right)\\
&-
\sum_{j=2}^{k}
\nabla^{k-1}_{U_2,\dots,\nabla_{U_1}U_j,\dots,U_k}W .
\end{align*}
Finally,
\(
F_W(p)=V_r,
\)
for the root node $r$. This is the computational form of \eqref{eq:elementary_differentials}: each internal node is evaluated by JVP-based corrected covariant derivatives, and planarity in $\hopfmkw$ enters through the child order $C(v)$.

\subsection{The Manifold Log-ODE Method}
\label{sec:prelim_logode}

Let \(\mathcal H\) be a connected, cocommutative graded Hopf algebra, as
detailed in \Cref{app:graded_connected_truncated}. By
\cref{thm:milnor_moore},
\(\mathcal H\cong\mathcal U(\mathrm{Prim}(\mathcal H))\), so logarithmic
signature increments can be expressed in primitive coordinates. For a path
segment \(X_k\), write its \(\log_{\mathcal H}\)-signature as
\(\lambda_k=\sum_{p\in\mathcal B_{\mathcal H}^{\mathrm{prim}}}
\lambda_k^p p\in\mathrm{Prim}(\mathcal H)\). Given the vector-field lift
\(F_W\), the corresponding log-ODE field is
\begin{equation}
\label{eq:window_logode_field}
    L_k(Y)
    =
    \sum_{p\in\mathcal B_{\mathcal H}^{\mathrm{prim}}}
    \lambda_k^p F_W(p)(Y).
\end{equation}
The lifted field \(F_W(p)(Y)\) is evaluated by the recursions in
\Cref{sec:vector_field_lifts}: shuffle primitives use the Lie bracket
recursion, while tree primitives use the multi-ary covariant derivative
recursion. The vector-field lift is evaluated at the solver stages
whenever \(L_k\) is queried.

In Euclidean space, the segment update is obtained by solving the
normalised ODE
\(\dot Z_\tau=L_k(Z_\tau)\), \(Z_0=Y_k\), \(\tau\in[0,1]\), where the
segment length is already encoded in \(\lambda_k\). We solve this using Heun's second-order method and recover log-NCDE when, additionally, $\mathcal{H} = \hopfshuf$.

While our method works for any connection-equipped manifold, we make the implementation choice of using homogeneous spaces. As such, we represent tangent vectors in frame or Lie-algebra coordinates. Let $G$ be the Lie group acting on $\mathcal M$, and write $g\cdot Y$ for the action. For $\xi\in\mathfrak g$, denote the induced fundamental vector field by
\[
    \xi^\#(Y)
    =
    \left.\frac{\mathrm d}{\mathrm d\epsilon}\right|_{\epsilon=0}
    \exp(\epsilon \xi)\cdot Y .
\]
The lifted primitive evaluator returns frame coordinates
$\widehat F_W(p)(Y)\in\mathfrak g$ satisfying
\[
    F_W(p)(Y)
    =
    \widehat F_W(p)(Y)^\#(Y).
\]
Consequently, the window field is represented by
\begin{equation*}
    \widehat L_k(Y)
    =
    \sum_{p\in\mathcal B_{\mathcal H}^{\mathrm{prim}}}
    \lambda_k^p \widehat F_W(p)(Y),
    \quad
    L_k(Y)=\widehat L_k(Y)^\#(Y).
\end{equation*}
The log-ODE on the time interval $\tau\in[0,1]$ is then
\[
    \dot Z_\tau
    =
    \widehat L_k(Z_\tau)^\#(Z_\tau),
    \qquad
    Z_0=Y_k .
\]
This is a geometric numerical integration problem that may be solved using \acrlong{rkmk} or \gls{cf} methods. We select $\mathrm{CF\text{-}EES}(2,5)$ for its minimal exponential design which minimises per-step memory and compute costs \citep{shmelev2026expliciteffectivelysymmetricschemes}. We refer the reader to their text and appendices for an overview of commutator-free methods in neural differential equations. 

In the flat case, the group action reduces to addition, and the exponential update reduces to the corresponding explicit ODE method applied to \eqref{eq:window_logode_field}. Hence, the same implementation covers the Euclidean $\hopfgl$ case as well as the homogeneous-space $\hopfshuf$ and $\hopfmkw$ cases.
\subsection{Complexity Analysis}
At truncation depth \(N\), the algebraic cost is governed by the primitive
basis size of the chosen Hopf algebra. Let \(b_{\mathcal H}(n)\) denote the
number of degree-\(n\) primitive basis elements and let \(d\) be the path
dimension. For \(\hopfshuf\), \(b_{\shuffle}(n)\) is given by Witt's formula
\cite{witt_treue_1937}; for \(\hopfgl\), primitives are \(d\)-decorated
non-planar rooted trees \cite{gobel_1-1-correspondence_1980}; and for
\(\hopfmkw\), primitives are \(d\)-decorated planar rooted trees
\cite{walkup_number_1972}. Hence
\begin{equation*}
\begin{aligned}
b_{\shuffle}(n)
&=
\frac{1}{n}\sum_{k \mid n} \mu(k)\, d^{n/k}, \\[3pt]
b_{\mathrm{GL}}(n)
&=
t_n d^n, \\[3pt]
b_{\mathrm{MKW}}(n)
&=
C_{n-1}d^n
=
\frac{1}{n}\binom{2n-2}{n-1}d^n,
\end{aligned}
\end{equation*}
where \(\mu\) is the Möbius function and \(t_n\) is the number of non-planar
rooted-tree shapes with \(n\) vertices. Write
\[
|\mathcal{B}_{\mathcal H}^{\mathrm{prim}}(N)|
\coloneqq
\sum_{n=1}^{N} b_{\mathcal H}(n).
\]
Assume all models use the same atomic vector field \(f_\theta\), an
\(m\)-layer \gls{mlp} with hidden width \(n_h\) and hidden-state dimension
\(u\). For \(f_\theta\colon\mathbb R^u\to\mathbb R^{u\times d}\), one primal
evaluation costs
\[
C_\theta
=
2u n_h
+
2(m-1)n_h^2
+
2u d n_h .
\]
Using the standard estimate that one JVP costs three primal evaluations, and
reusing derivative evaluations across primitive fields, the Euclidean
per-field costs are
\begin{equation*}
\begin{aligned}
\mathcal{C}_{\shuffle}^{(N)}
&=
3\,|\mathcal{B}_{\shuffle}^{\mathrm{prim}}(N-1)|\,C_\theta, \\[4pt]
\mathcal{C}_{\mathrm{GL}}^{(N)}
&=
3\,|\mathcal{B}_{\mathrm{GL}}^{\mathrm{prim}}(N-1)|\,C_\theta .
\end{aligned}
\end{equation*}

For manifold-valued models, ordinary JVPs are replaced by covariant or
coordinate directional derivatives. The \(\hopfshuf\) lift evaluates each
non-letter Lyndon primitive by a Lie bracket, which requires two directional
derivative evaluations. Following the same JVP cost model, this gives
\[
\mathcal{C}_{\shuffle,\nabla}^{(N)}
\approx
6\,|\mathcal{B}_{\shuffle}^{\mathrm{prim}}(N-1)|\,C_\theta .
\]
For \(\hopfmkw\), a degree-\(n\) planar tree has \(n-1\) edges, and each edge
corresponds to one covariant JVP in the ordered tree lift. Hence
\(
D_{\mathrm{MKW}}(N)
=
\sum_{n=1}^{N}(n-1)b_{\mathrm{MKW}}(n),
\)
and
\[
\mathcal{C}_{\mathrm{MKW},\nabla}^{(N)}
\approx
3D_{\mathrm{MKW}}(N)C_\theta .
\]
These estimates isolate the neural evaluation cost; manifold implementations
also incur lower-order geometry costs from coordinate or frame conversion,
connection operations, Lie-bracket structure terms, and group actions.

It remains to account for solver stages. With \(J\) internal steps and \(R\)
field evaluations per step,
\[
\mathcal{C}_{\mathcal H,\mathrm{window}}^{(N)}
=
JR\,\mathcal{C}_{\mathcal H}^{(N)},
\quad
\mathcal H\in\{\hopfshuf,\hopfgl\},
\]
so Euclidean Heun gives \(2J\,\mathcal C_{\mathcal H}^{(N)}\). For homogeneous-space models integrated by a commutator-free scheme with \(R\) field-evaluation stages and \(Q\) exponentials per step,
\[
\mathcal{C}_{\mathcal H,\mathrm{window}}^{(N)}
\approx
JR\,\mathcal{C}_{\mathcal H,\nabla}^{(N)}
+
JQ\,C_{\mathcal{M}},
\quad
\mathcal H\in\{\hopfshuf,\hopfmkw\}.
\]
Here \(C_{\mathcal{M}}\) denotes the cost of the group exponential and action. For \(\mathrm{CF\text{-}EES}(2,5)\), \(Q=R=3\); in our implementation, each exponential uses a fourth-order approximation requiring two matrix multiplications \cite{Sastre_2019}.

\begin{table}[h]
\caption{Per-window costs for \(N=2\). Here \(P_2\) is the primitive
dimension, \(A_2\) is the neural derivative multiplier
so that one field evaluation costs \(3A_2C_\theta\), and \((R,Q)\) denotes
field-evaluation stages and exponentials per solver step. \(\mathbb{R}, \hopfshuf\) is log-\gls{ncde}.}
\label{tab:complexity_instantiated}
\begin{center}
\begin{small}
\renewcommand{\arraystretch}{1.15}
\resizebox{\linewidth}{!}{
\begin{tabular}{@{}lcccc@{}}
\toprule
Space/Algebra & \(P_2\) & \(A_2\) & \((R,Q)\) & Cost/window \\
\midrule
\(\mathbb{R}, \hopfshuf\)
&
\(\frac{d(d+1)}{2}\)
&
\(d\)
&
\((2,0)\)
&
\(6JdC_\theta\)
\\
\(\mathbb{R}, \hopfgl\)
&
\(d+d^2\)
&
\(d\)
&
\((2,0)\)
&
\(6JdC_\theta\)
\\

\(\mathcal{M}, \hopfshuf\)
&
\(\frac{d(d+1)}{2}\)
&
\(2d\)
&
\((3,3)\)
&
\(18JdC_\theta+3JC_{\mathcal M}\)
\\

\(\mathcal{M}, \hopfmkw\)
&
\(d+d^2\)
&
\(d^2\)
&
\((3,3)\)
&
\(9Jd^2C_\theta+3JC_{\mathcal M}\)
\\
\bottomrule
\end{tabular}}
\end{small}
\end{center}
\end{table}
\vspace{-1em}
\subsection{The Branched Signature Kernel}
\label{sec:branched_signature_kernel}

The geometric signature kernel between two paths $x,y$ is
\[
    k_{\mathrm{geo}}(x,y)
    =
    \langle \mathrm{Sig}^{N}(x),\mathrm{Sig}^{N}(y)\rangle,
\]
where the inner product is taken on the depth-\(N\) truncated tensor signature space. Signature kernels provide a principled similarity measure for stochastic processes \citep{chevyrev_signature_2022} and have been used to train neural SDEs through signature-kernel scoring objectives \citep{issa_non-adversarial_2023}. In practice, training minimises
\begin{equation}
    \label{eq:geo_sig_ker}
    \begin{aligned}
    \mathcal{L}_{\mathrm{geo}}(\theta)
    =
    \mathbb{E}&_{X,X'\sim P_\theta}
    \big[k_{\mathrm{geo}}(X,X')\big]\\
    &-
    2\mathbb{E}_{X\sim P_\theta,Y\sim P_{\mathrm{data}}}
    \big[k_{\mathrm{geo}}(X,Y)\big],
    \end{aligned}
\end{equation}
where $P_{\mathrm{data}}$ denotes the data law and $P_\theta$ the model law. The term
\(
    \mathbb E_{Y,Y'\sim P_{\mathrm{data}}}
    \big[k_{\mathrm{geo}}(Y,Y')\big]
\)
is constant in \(\theta\).

Existing signature-kernel implementations \citep{salvi_signature_2021,cass_numerical_2025,toth_scalable_2025,toth_users_2025} compute geometric signatures, whose semimartingale limits correspond to iterated Stratonovich integrals. For semimartingale data observed on a grid, piecewise-linear interpolation has finite variation, so its canonical lift has zero bracket and does not expose quadratic covariation as a driver coordinate. Consequently, an It\^o or branched-signature model using this representation must supply bracket information separately.

\begin{definition}[Branched signature kernel objective]
Fix a Hopf algebra \(\mathcal H\), truncation depth \(N\), and the coordinate
basis used by \(\operatorname{Sig}_{\mathcal H}^N\); throughout,
\(\langle\cdot,\cdot\rangle_{\mathcal H_{\le N}}\) denotes the Euclidean
inner product on these coordinates. For drivers \(\mathbf X,\mathbf Y\) enhanced with their quadratic variation, define
\[
    k_{\mathrm{br}}^{N}(\mathbf X,\mathbf Y)
    =
    \left\langle
    \operatorname{Sig}_{\mathcal H}^{N}(\mathbf X),
    \operatorname{Sig}_{\mathcal H}^{N}(\mathbf Y)
    \right\rangle_{\mathcal H_{\le N}} .
\]
The branched signature-kernel objective is
\begin{equation}
    \label{eq:branched_sig_ker}
    \begin{aligned}
    \mathcal{L}_{\mathrm{br}}(\theta)
    =
    \mathbb{E}&_{\mathbf X,\mathbf X'\sim P_\theta}
    \big[k_{\mathrm{br}}^{N}(\mathbf X,\mathbf X')\big]\\
    &-
    2\mathbb{E}_{\mathbf X\sim P_\theta,\mathbf Y\sim P_{\mathrm{data}}}
    \big[k_{\mathrm{br}}^{N}(\mathbf X,\mathbf Y)\big].
    \end{aligned}
\end{equation}
As in \eqref{eq:geo_sig_ker}, the omitted data--data term is constant in \(\theta\).
\end{definition}

When bracket increments are available from the simulator, the objective uses them directly rather than reconstructing them from realised quadratic variation.
Appendix~\ref{app:finite_grid_bracket_kernel} formalises this finite-grid distinction. A geometric kernel computed from path values is a kernel on the projected path law, whereas the branched kernel is computed on the enhanced law containing bracket coordinates. A hypothetical bracket-aware baseline built only from grid samples must first reconstruct those coordinates, and is therefore subject to realised-covariance error. For standard continuous semimartingales, fixed-time realised-covariance fluctuations are typically of order \(\sqrt{\Delta_n}\), so supplying analytic or simulator-ground-truth brackets can remove this finite-grid error source without changing the continuous-limit law-matching target.

\section{Experiments}
We design our experimental evaluation to focus on regimes where standard Euclidean \glspl{nrde} struggle. We validate \gls{bnrde} across three distinct domains, mapping each to the Hopf algebra that captures its geometry and causality.

We first consider Euclidean rough volatility. Utilising $\hopfgl$, we task \gls{bnrde} with learning an unconditional generative model of the rough Bergomi stochastic volatility model, evaluating performance via the fidelity of the marginal laws. Next, we examine Stratonovich manifold dynamics under $\hopfshuf$ and evaluate forecasting performance. Finally, we address manifold-valued It\^o rough paths, employing $\hopfmkw$ to learn a generative model of covariance dynamics on the \gls{spdman} matrix manifold. Dataset code is available in \href{https://github.com/luke-a-thompson/RoughBench}{RoughBench}, and further experimental details may be found in \cref{app:experimental_details}.

\subsection{Baseline Models}

As baselines, we employ \gls{mnode} \cite{lou_neural_2020}, \gls{ncde} with linear \cite{kidger_neural_2020}, Hermite \cite{morrill_choice_2022}, and \gls{sg} \cite{bastian_continuous-time_2025} interpolation, dubbed \gls{ncde}, \gls{ncde}++ and \gls{sg}-\gls{ncde}, respectively. We also consider signature methods \gls{nrde} \cite{morrill_neural_2021} and log-\gls{ncde} \cite{walker_log_2024}, and discrete-time GRU \cite{chung2014empiricalevaluationgatedrecurrent}, xLSTM \cite{beckXLSTMExtendedLong2024}, and stacked xLSTM, the backbone of \citet{beckXLSTMExtendedLong2024}.

\subsection{Generative Modeling of Rough Volatility Under It\^o Integration}
\label{sec:rough_volatility}

Rough volatility models posit that log-volatility evolves with fractional regularity ($H < \tfrac{1}{2}$) \cite{gatheral_volatility_2014}. A prominent example is the \gls{rbergomi} model \cite{bayer_pricing_2015}, where volatility is driven by a Volterra process $W_t^H = \int_0^t K_H(t-s) \mathrm{d}W_s$.
Simulating these paths is a known bottleneck, as exact methods scale quadratically with the number of time steps. Furthermore, standard signature-based approaches are constrained to the geometric (Stratonovich) setting; capturing the financially-relevant It\^o integral requires augmenting the path with a lead-lag (Hoff) lift \cite{flint_discretely_2016}, doubling channel dimension from $d \to 2d$, massively growing signature size.

We use \gls{rbergomi} to stress-test the It\^o capability of the \gls{bnrde} framework in flat geometry ($\mathcal{M}=\mathbb{R}^d$). Unlike \gls{nrde}, our $\mathcal{H}_{\mathrm{GL}}$ formulation accommodates branched rough paths, allowing \gls{bnrde} to process the driving signal without lead--lag augmentation. Equation and simulation details can be found in \cref{appx:exp_det_rough_volatility}.
For training, we generate a time-augmented latent piecewise-linear Brownian motion $(t,Z_{\mathrm{lat}})$ on $[0,1]$ with grid size $\Delta_n$, and sample generated log-price paths $X \coloneqq \widehat{\log S}_\theta \sim P_\theta$. Ground-truth paths are $Y \coloneqq \log S \sim P_{\mathrm{data}}$. All models minimise $\mathcal{L}_{\mathrm{geo}}$ except \gls{bnrde}, which receives three additional fine-tuning epochs under the branched signature-kernel score between $(t,X)$ and $(t,Y)$, adding quadratic-variation/covariation terms $\langle X^i,X^j\rangle$ and $\langle Y^i,Y^j\rangle$; the former are the squared increments, the latter from the simulator ground truth.

In \cref{tab:rough_vol_results} we report \gls{ks} distances between model-generated and ground truth time-marginal laws. \gls{bnrde} achieves the strongest fit across three of four horizons, outperforming $\hopfshuf$-based \gls{nrde} and log-\gls{ncde}.

\begin{table}[t]
\centering
\caption{\gls{rbergomi} \gls{ks} across time-marginals}
\label{tab:rough_vol_results}
\resizebox{\linewidth}{!}{
\begin{tblr}{
  colspec = {l c *{4}{c}},
  cell{1}{1} = {r=2}{c},
  cell{1}{2} = {r=2}{c},
  cell{1}{3} = {c=4}{c},
  hline{1,Z} = {-}{0.08em},
  hline{2} = {3-6}{},
  hline{3} = {1-6}{},
}
\textbf{Method}
& \textbf{\shortstack{Training\\Time (s)}}
& \textbf{KS Score ($\times 10^{-2}$)} &  &  &  \\
& & \textbf{128} & \textbf{256} & \textbf{384} & \textbf{512} \\
xLSTM & 38 & $11.04 \scriptstyle{\pm 1.38}$ & $10.31 \scriptstyle{\pm 1.30}$ & $10.94 \scriptstyle{\pm 0.89}$ & $11.71 \scriptstyle{\pm 0.71}$ \\
\gls{ncde} & 1154 & $9.06 \scriptstyle{\pm 1.94}$ & $8.93 \scriptstyle{\pm 1.07}$ & $9.75 \scriptstyle{\pm 1.02}$ & $9.65 \scriptstyle{\pm 0.81}$ \\
\gls{ncde}++ & 12021 & $8.11 \scriptstyle{\pm 0.93}$ & $8.82 \scriptstyle{\pm 0.86}$ & $9.67 \scriptstyle{\pm 0.34}$ & $10.73 \scriptstyle{\pm 0.52}$ \\
Stacked xLSTM & 313 & $\underline{7.69 \scriptstyle{\pm 0.16}}$ & $9.19 \scriptstyle{\pm 1.09}$ & $9.50 \scriptstyle{\pm 1.21}$ & $10.95 \scriptstyle{\pm 1.11}$ \\
log-\gls{ncde} & 74 & $7.75 \scriptstyle{\pm 1.39}$ & $\underline{7.38 \scriptstyle{\pm 0.46}}$ & $8.72 \scriptstyle{\pm 0.88}$ & $8.89 \scriptstyle{\pm 0.24}$ \\
\gls{nrde} & 157 & $8.74 \scriptstyle{\pm 0.96}$ & $8.41 \scriptstyle{\pm 0.94}$ & $8.12 \scriptstyle{\pm 0.79}$ & $\mathbf{7.47 \scriptstyle{\pm 0.83}}$ \\
GRU & 42 & $7.73 \scriptstyle{\pm 1.78}$ & $7.59 \scriptstyle{\pm 0.58}$ & $\underline{7.87 \scriptstyle{\pm 0.67}}$ & $\underline{8.02 \scriptstyle{\pm 1.15}}$ \\
\hline
\textbf{\gls{bnrde}} (GK) & 233 & $\mathbf{6.89 \scriptstyle{\pm 1.81}}$ & $\mathbf{6.91 \scriptstyle{\pm 0.98}}$ & $8.56 \scriptstyle{\pm 0.38}$ & $9.58 \scriptstyle{\pm 0.36}$ \\
\textbf{\gls{bnrde} (BK)} & 47 & $7.93 \scriptstyle{\pm 1.49}$ & $7.70 \scriptstyle{\pm 0.87}$ & $\mathbf{7.77 \scriptstyle{\pm 1.02}}$ & $8.11 \scriptstyle{\pm 0.52}$ \\
\end{tblr}

}
\end{table}
\begin{figure}[h]
    \centering
    \includegraphics[width=\linewidth]{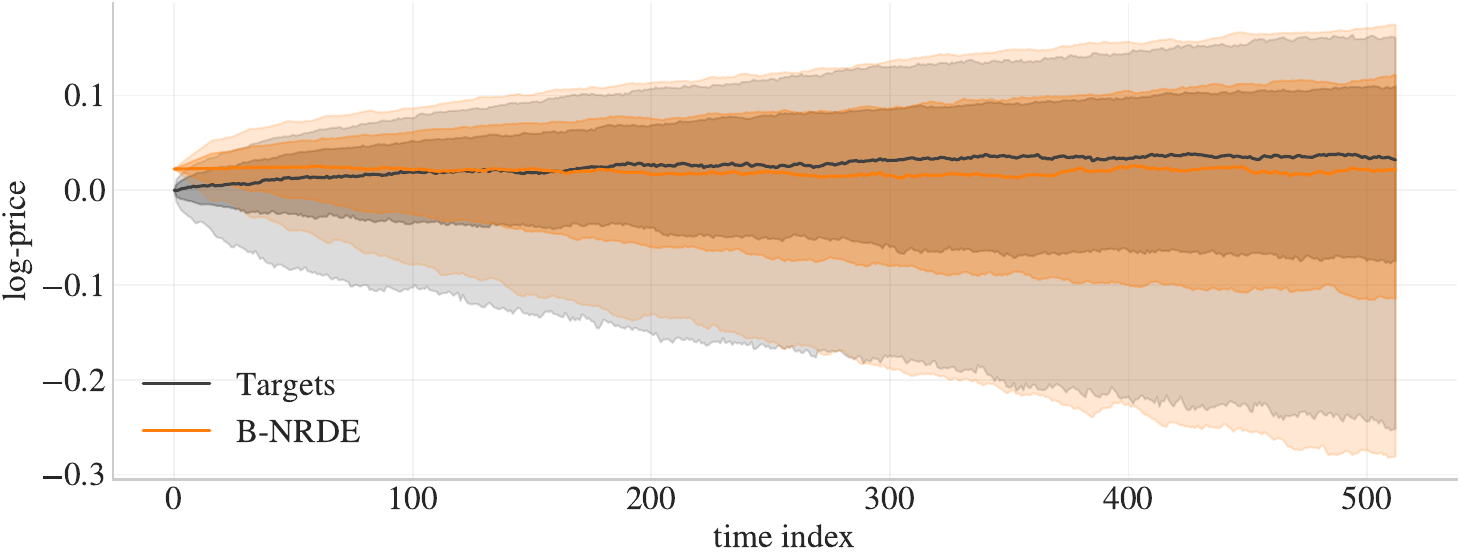}
    \caption{\gls{rbergomi} sample paths generated by \gls{bnrde} showing good fit to the underlying law.}
    \label{fig:rbergomi_bnrde_fan}
    \vspace{-1em}
\end{figure}

\label{sec:motion_dynamics}
\subsection{Sim-to-Real Dynamics Forecasting}
Three-dimensional motion forecasting is central to robotics and computer vision, with applications to pose prediction \citep{yang_dynamical_2021, dunkel_normalizing_2024} and occlusion-robust perception \cite{di_so-pose_2021}. We consider sim-to-real forecasting of rotational dynamics, previously out of reach for signature methods due to the lack of a manifold-compatible formulation. As a deterministic task without an It\^o dynamic, we leverage a coordinate-free $\hopfshuf$ formulation.
\begin{figure}[t]
    \centering
    \includegraphics[width=0.98\linewidth]{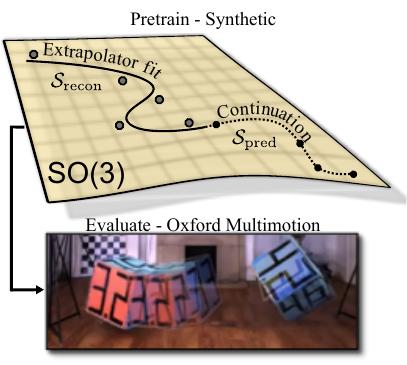}
    \caption{Schematic description of the sim-to-real dynamics training pipeline. Adapted from \citet{bastian_continuous-time_2025}.}
    \label{fig:sim_to_real}
    \vspace{-1em}
\end{figure}

Following \citet{bastian_continuous-time_2025}, we generate offline continuous-time rotation trajectories \(R\colon[0,1]\to \mathrm{SO}(3)\), where \(R(t)\) is the orientation at time \(t\).
From each trajectory we extract sliding windows \(\{(t_j, R(t_j))\}_{j=0}^{m}\) and split each window into a reconstruction prefix \(\mathcal{S}_{\mathrm{recon}}\) and prediction suffix \(\mathcal{S}_{\mathrm{pred}}\). An extrapolator \(\mathsf{E}\) is fit on \(\mathcal{S}_{\mathrm{recon}}\) and evaluated on the full window to produce a dense input path \(\tilde R(t)\) over \(\mathcal{S}_{\mathrm{recon}}\cup\mathcal{S}_{\mathrm{pred}}\). This extrapolated path is then provided to the model, which outputs predicted rotations \(\{\hat R_{\theta}(t_j)\}_{j=0}^{m}\) over the same times. We pretrain by minimising the Frobenius discrepancy to the ground truth over the full window:
\[
\min_{\theta}\; \sum_{j\in\mathcal{J}} \bigl\| \hat R_{\theta}(t_j) - R(t_j) \bigr\|_F^2,
\quad \hat R_{\theta}(t_j),\, R(t_j)\in \mathrm{SO}(3),
\]
where \(\mathcal{J}=\{0,\dots,m\}\).

To assess sim-to-real generalisation, simulation-pretrained models are evaluated on the \gls{omd} \citep{judd_oxford_2019}, with windows and extrapolations identical to those used for the simulation data. The rotation geodesic error (RGE) \citep{huynh_metrics_2009}, defined as \(\operatorname{RGE}(R_1, R_2) = 2 \arcsin \left( \frac{\|R_2-R_1\|_F}{2 \sqrt{2}} \right)\), is computed over both segments. \Cref{fig:sim_to_real} illustrates the protocol.

\begin{table}[t]
\centering
\caption{Training test set Frobenius norm and RGE (degrees) across different motion scenarios in the \gls{omd} dataset.}
\label{table:omd_dataset}
\resizebox{\linewidth}{!}{%
\begin{tblr}{
  colspec = {l *{4}{c}},
  hline{1-2,9-10} = {1,2-5}{},
}
Method                       & {Pretraining\\Frobenius} & {Static\\ Motion}        & {Translation\\ Motion}   & {Unconstrained\\ Motion} \\
\gls{mnode}          & 2.461                    & 132.40                   & 118.25                   & 121.08                   \\
$\mathrm{SO}(3)$-\gls{ncde}  & 0.294                    & 22.21                    & 21.83                    & 22.45                    \\
$\mathrm{SO}(3)$-GRU         & 0.271                    & 20.25                    & 20.39                    & 21.13                    \\
xLSTM                   & 0.233                    & 16.67                     & 16.77                     & 17.36                     \\
\gls{nrde}                   & 0.097                    & 7.45                     & 7.34                     & 7.32                     \\
Stacked xLSTM                   & 0.110                    & 7.11                     & 7.50                     & 7.03                     \\
SG-\gls{ncde}                & 0.050                    & \textbf{2.93}            & \textbf{3.31}            & \textbf{2.80}            \\
\textbf{\gls{bnrde} (GK)}        & \textbf{0.049}           & \underline{3.23}         & \underline{3.70}         & \underline{3.33}         \\
\end{tblr}
}
\end{table}

Whilst SG-\gls{ncde} attains the lowest RGE, \gls{bnrde} achieves comparable accuracy with only two solver steps versus 20. The remaining small deficit is not explained by the simulation pretraining loss, suggesting modest degradation in sim-to-real transfer. Crucially, \gls{bnrde} generalises the SG-\gls{ncde} pipeline by admitting rough drivers, thereby removing the $C^1$ interpolant requirement and permitting non-smooth extrapolators such as \glspl{mlp}.

\subsection{It\^o Dynamics over the SPD Manifold}

\gls{spdman}-covariance dynamics learning arises in multivariate finance \citep{noureldin_multivariate_2012, johansson_simple_2023}, medical imaging \citep{pennec_manifold-valued_2020}, and, more recently, Riemannian diffusion models \citep{park_riemannian_2022, de_bortoli_riemannian_2022, thornton_riemannian_2022}. Notably, finance canonically requires non-anticipative It\^{o} modelling to avoid lookahead bias, and diffusion models are typically formulated as It\^{o} diffusions.

As such, we study a mean-reverting It\^{o} diffusion on the SPD manifold $\mathbb{S}^{++}_d$ endowed 
with the \gls{aiman} geometry, testing \gls{bnrde}'s ability to learn non-anticipative 
stochastic dynamics on a curved state space while strictly preserving the \gls{spdman} 
cone. Let \(M \in \mathbb{S}^{++}_d\) and let \(X_0 \in \mathbb{S}^{++}_d\). Consider the OU-type Itô diffusion
\begin{equation}\label{eq:affine_invariant_diffusion}
    \mathrm{d}X_t
    =
    \eta\,\log_{X_t}(M)\,\mathrm{d}t
    +
    \sigma\,X_t^{1/2}\,\mathrm{d}B_t\,X_t^{1/2},
\end{equation}
where \(\eta>0\) controls mean reversion, \(\sigma\ge 0\) is the noise scale, and \(B_t\in\mathrm{Sym}(d)\) is a symmetric matrix Brownian motion. In practice, the vectorised state $\tilde{x}_t = \mathrm{vech}(X_t)\in\mathbb{R}^{q}$, $\mathrm{d}\langle \tilde{x} \rangle_t \in \mathbb{R}^{q \times q}$ with $q = \frac{d(d+1)}{2}$ is used throughout.

Training minimises an expected signature-kernel objective between the ground-truth and generated trajectories, with log-\gls{ncde} using the geometric signature kernel and \gls{bnrde}. Since $\mathbb S^{++}_d$ is not totally ordered, a \gls{ks} statistic is not directly applicable. We therefore sort the eigenvalues increasingly and report empirical time-marginal \(W_1\) distances between the eigenvalue vectors $\lambda(X_t), \lambda(\widehat{X}_t)\in\mathbb{R}^d$.

In \cref{tab:spd_manifold_results} we find that the manifold-preserving capabilities of \gls{bnrde} improve on the results of Euclidean log-\gls{ncde} in terms of law-matching, showing an average improvement of 56.2\%. However, visual inspection of \cref{fig:spd_eigenvalue_trajectories} shows somewhat disappointing fidelity near the initial condition, a common pitfall of signature-based methods that are invariant to the starting value without augmentation \citep{morrill_generalised_2021}. We also find no meaningful performance difference between the branched and geometric kernels in this setting.

\begin{table}[h]
\centering
\caption{1-Wasserstein distance between predicted and ground-truth eigenvalue marginals on $\mathbb{S}^{++}_d$.}
\label{tab:spd_manifold_results}
\resizebox{\linewidth}{!}{
\begin{tblr}{
    colspec = {l *{5}{c}},
    cell{1}{1} = {r=2}{c},
    cell{1}{2} = {r=2}{c},
    cell{1}{3} = {c=4}{c},
    hline{1,Z} = {-}{0.08em},
    hline{2} = {3-6}{},
    hline{3} = {1-6}{},
    hline{9} = {1-6}{},
  }
  \textbf{Method}
  & \textbf{\shortstack{Training\\Time (s)}}
  & \textbf{1-Wasserstein ($\times 10^{-2}$)} &  &  &  \\
  & & \textbf{128} & \textbf{256} & \textbf{384} & \textbf{512} \\
  \gls{ncde}++   & $1605$ & $64.95 \scriptstyle{\pm 0.98}$ & $58.27 \scriptstyle{\pm 1.36}$ & $50.56 \scriptstyle{\pm 1.73}$ & $42.15 \scriptstyle{\pm 2.22}$ \\
  \gls{ncde}     & $1241$ & $65.56 \scriptstyle{\pm 1.74}$ & $58.72 \scriptstyle{\pm 2.38}$ & $50.32 \scriptstyle{\pm 3.02}$ & $42.38 \scriptstyle{\pm 3.11}$ \\
  \gls{nrde}     & $232$ & $11.80 \scriptstyle{\pm 0.15}$ & $15.50 \scriptstyle{\pm 0.16}$ & $17.36 \scriptstyle{\pm 0.13}$ & $17.70 \scriptstyle{\pm 0.28}$ \\
  xLSTM          & $12$ & $8.36 \scriptstyle{\pm 0.63}$ & $10.49 \scriptstyle{\pm 0.41}$ & $13.16 \scriptstyle{\pm 0.39}$ & $14.09 \scriptstyle{\pm 0.53}$ \\
  log-\gls{ncde} & $481$ & $12.10 \scriptstyle{\pm 0.30}$ & $14.76 \scriptstyle{\pm 0.22}$ & $16.17 \scriptstyle{\pm 0.32}$ & $16.81 \scriptstyle{\pm 0.40}$ \\
  Stacked xLSTM  & $18$ & $\mathbf{5.50 \scriptstyle{\pm 0.10}}$ & $6.85 \scriptstyle{\pm 0.35}$ & $11.47 \scriptstyle{\pm 0.63}$ & $14.30 \scriptstyle{\pm 0.46}$ \\
  \textbf{\gls{bnrde} (GK)} & $495$ & $\underline{5.73 \scriptstyle{\pm 0.20}}$ & $\mathbf{5.81 \scriptstyle{\pm 0.19}}$ & $\mathbf{6.28 \scriptstyle{\pm 0.86}}$ & $\underline{8.35
  \scriptstyle{\pm 0.94}}$ \\
  \textbf{\gls{bnrde} (BK)} & $502$ & $5.81 \scriptstyle{\pm 0.20}$ & $\underline{5.87 \scriptstyle{\pm 0.21}}$ & $\underline{6.30 \scriptstyle{\pm 0.85}}$ & $\mathbf{8.32 \scriptstyle{\pm
  0.94}}$ \\
\end{tblr}
}
\end{table}

\begin{figure}[H]
    \centering
    \begin{subfigure}[t]{0.9\linewidth}
        \centering
        \includegraphics[width=\linewidth]{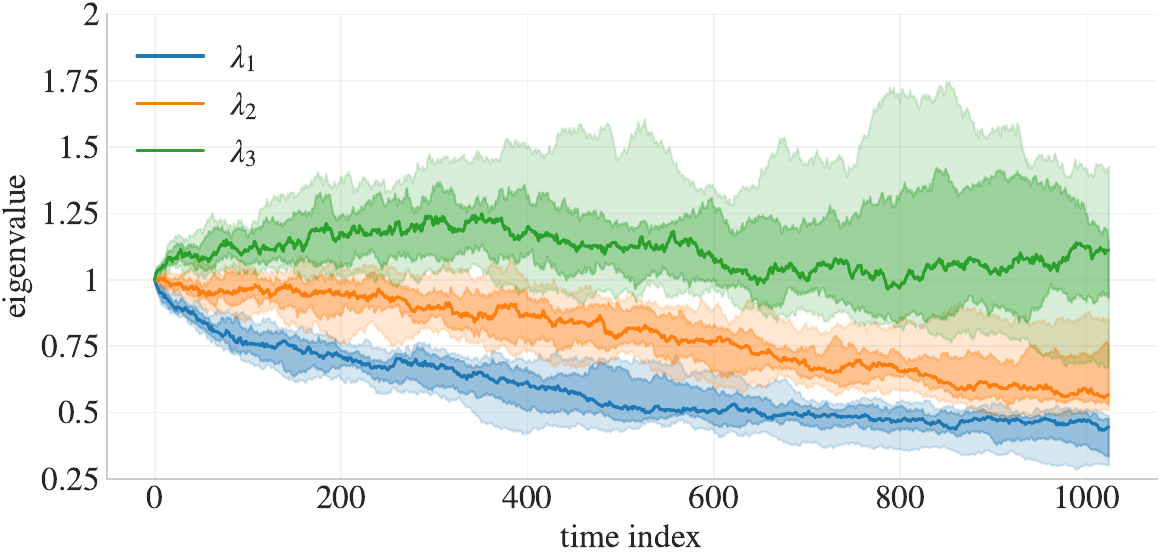}
        \caption{Ground truth eigenvalue trajectories of \cref{eq:affine_invariant_diffusion}.}
        \label{fig:spd_eigenvalue_targets}
    \end{subfigure}

    \vspace{0.75em}

    \begin{subfigure}[t]{0.9\linewidth}
        \centering
        \includegraphics[width=\linewidth]{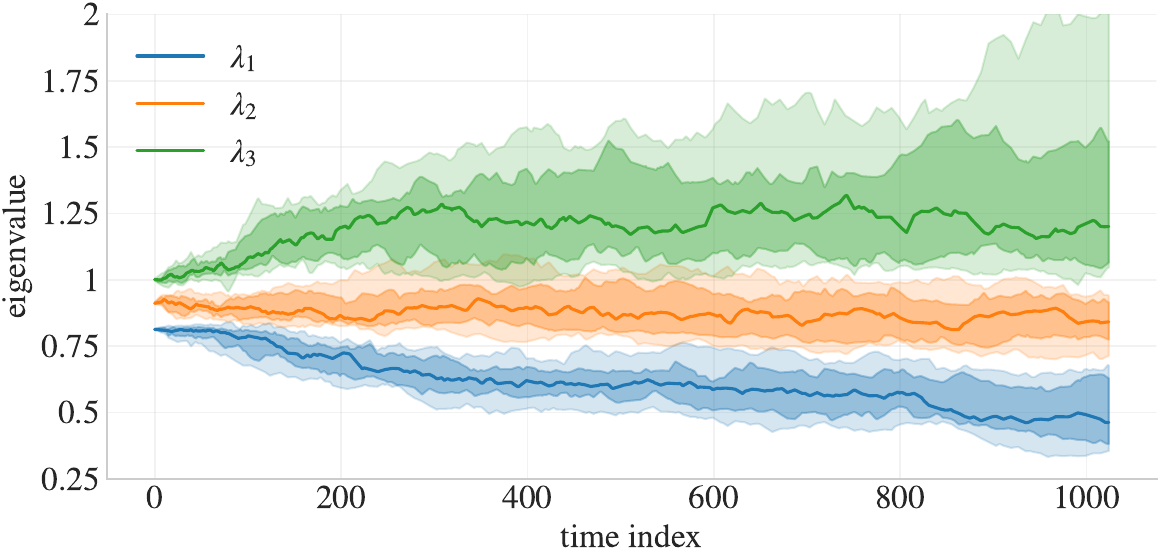}
        \caption{Predicted eigenvalue trajectories output by \gls{bnrde}.}
        \label{fig:spd_eigenvalue_preds}
    \end{subfigure}

    \caption{Ground truth vs.\ \gls{bnrde} eigenvalue trajectories.}
    \label{fig:spd_eigenvalue_trajectories}
\end{figure}

\section{Discussion}
In this work, we introduced \gls{bnrde}, a generalisation of \gls{nrde} to branched rough paths, and a branched-signature-kernel training objective for neural \glspl{sde}. Our method extends signature methods beyond the geometric/Stratonovich setting to It\^{o}, manifold, and It\^{o}-on-manifold dynamics, while remaining empirically competitive. We summarise key limitations of the current method and outline the most promising extensions.

\subsection{Limitations}
Unlike the untruncated PDE-based solvers available for geometric kernels \citep{salvi_signature_2021}, branched signature kernels require explicit truncation, discarding higher-order iterated-integral information at greater memory and compute cost. We expect this to worsen as the state dimension $d$ grows, since the quadratic-variation features scale as $d^2$.

\subsection{Future Directions}
\paragraph{Numerical} Such PDE-based solvers, or other approximation schemes, offer the most promising route to avoiding explicit truncation in our kernel. Adaptive log-ODE schemes could also permit larger signature windows and faster integration \cite{bayerAdaptiveAlgorithmRough2023}.

\paragraph{Signatures} The log-signature quotients out the shuffle identities, projecting onto the strictly smaller Lyndon-word basis. No analogous projection is known explicitly for the branched Hopf algebras used here, though recent work \cite{bellingeri_branched_2024} suggests one exists; this would yield smaller branched log-signatures and improve the memory efficiency of B-NRDE. For scalar \glspl{rde}, such as the rough Bergomi model of \cref{sec:rough_volatility}, the theory of multi-index rough paths, elements of the Linares--Otto-Tempelmayr Hopf algebra \citep{Linares_2023}, offers an even more compact representation by exploiting symmetries in the vector fields of scalar equations \cite{Bellingeri_2026}.

\paragraph{SPDEs} The rough-path philosophy extends from finite-dimensional differential equations to \glspl{spde} via the theory of regularity structures \cite{hairer_theory_2014}. Much like how planarly branched rough paths enable solving \glspl{rde} over manifolds, planar regularity structures \cite{rahm_planar_2022} provide the corresponding generalisation for regularity structures and \glspl{spde}. \Cref{com:future_dir} recapitulates \citet[Fig.~1]{rahm_planar_2022} and points to a natural future direction: a neural planar-regularity-structure model in a $\mathbb{R}^d \to \mathcal{M}$ setting, beyond the $\mathbb{R}\to\mathcal{M}$ framework of \gls{bnrde} and the $\mathbb{R}^d\to\mathbb{R}^d$ framework of Neural Operator with Regularity Structure \cite{hu_neural_2022} and Deep Latent Regularity Net \cite{gong_deep_2023}.

\begin{figure}[H]
\centering
\begin{tikzcd}[
  row sep=18mm,
  column sep=32mm,
  cells={nodes={align=center}},
  every arrow/.append style={line width=0.9pt},
  every label/.append style={font=\small},
]
  \shortstack{NRDE\\Log-NCDE}
    \arrow[r, "\text{multi-dim drivers}"]
    \arrow[d, "\shortstack{\text{planar,}\\\text{branched lift\ }}"']
&
  \shortstack{DLR-Net\\NORS}
    \arrow[d, dotted, "\shortstack{\text{\ planar}\\\text{lift?}}"]
\\
  \text{B-NRDE}
    \arrow[r, dotted, "\text{planar RS?}"']
&
  \text{Planar RS?}
\end{tikzcd}
\caption{Positioning of our method and a hypothetical planar-regularity-structure extension.}
\label{com:future_dir}
\end{figure}

\clearpage

\section*{Acknowledgements}
We thank Slade Matthews for his supervision, guidance, and support. We also thank Daniil Shmelev for providing his software library, pySigLib, which we used extensively in our work, and for rapidly implementing the requested features and bug fixes needed for the experiments.

\section*{Impact Statement}
This paper presents fundamental work in time-series forecasting aimed at advancing the field of Machine Learning. There are many potential societal consequences of our work, but none of which we feel must be specifically highlighted here.

\bibliography{references}
\bibliographystyle{icml2026}

\newpage
\appendix
\onecolumn
\crefalias{section}{appendix}
\glsresetall

\section{The Hopf Algebras of Rough Paths}
\label{appx:hopf_algebras_of_rough_paths}

This appendix records the algebraic structures used throughout the paper. We first detail the formal requirements imposed on our Hopf algebras, then explain why we use
\begin{enumerate*}[label=(\roman*)]
    \item the \emph{graded dual} of the shuffle Hopf algebra for geometric, Stratonovich-type drivers,
    \item \gls{gl} rather than \gls{bck} for branched, It\^o-type drivers, and
    \item the \emph{graded dual} of the \gls{mkw} Hopf algebra for the manifold log-ODE scheme.
\end{enumerate*}
We keep the presentation concise and refer to \citet{kern_flow_2023} and \citet{manchon_rough_2025} for a more comprehensive account.

\subsection{Hopf Algebras}
\label{app:graded_connected_truncated}

\paragraph{Gradedness.}
A Hopf algebra \(\mathcal H\) is graded if it admits a decomposition
\[
    \mathcal H=\bigoplus_{n\ge 0}\mathcal H_n
\]
such that its product \(\star\) and coproduct \(\Delta\) respect degree:
\begin{equation}
    \mathcal{H}_m\star \mathcal{H}_n \subseteq \mathcal{H}_{m+n},
    \qquad
    \Delta(\mathcal{H}_n)\subseteq \bigoplus_{m=0}^n \mathcal{H}_m\otimes \mathcal{H}_{n-m}.
\end{equation}
The component \(\mathcal H_n\) corresponds to level \(n\) signature coordinates. For word signatures, \(n\) is word length; for branched signatures, \(n\) is the number of vertices.

\paragraph{Connectedness.}
A graded Hopf algebra is connected if the degree-zero component is spanned by the unit:
\[
    \mathcal{H}_0=\operatorname{span}\{1\}.
\]
For instance, in the \gls{gl} Hopf algebra on rooted forests, the empty forest is the unit \(1\), and every non-empty forest has positive degree.

\paragraph{Truncation.}
In applications, we truncate at depth \(N\):
\[
    \mathcal{H}^{(\le N)} := \bigoplus_{n=0}^{N}\mathcal{H}_n .
\]
This truncation matches the depth of the signature or log-signature retained in the numerical scheme.

\paragraph{Primitives and grouplike elements.}
Primitive elements have no nontrivial coproduct:
\[
    \operatorname{Prim}(\mathcal{H})
    :=
    \{p\in\mathcal H:\Delta p=p\otimes 1+1\otimes p\}.
\]
For rough paths, increments are characters of the coordinate Hopf algebra; equivalently, they are grouplike elements of the graded dual. The Hopf logarithm \(\log_{\star}\), as in \cref{eq:hopf_log}, maps grouplike elements to primitive elements.

\subsection{Why cocommutativity matters: Milnor--Moore}
\label{sec:milnor_moore}

The log-ODE construction is expressed at the level of primitives: one takes the Hopf logarithm of a grouplike increment, obtains a Lie element, and then exponentiates the resulting truncated Lie polynomial in differential operators. This gives a compression from the full Hopf algebra to its primitive part. To recover the Hopf algebra from its primitives, we use the Milnor--Moore theorem.

\begin{theorem}[Milnor--Moore \cite{milnorStructureHopfAlgebras1965}]
\label[theorem]{thm:milnor_moore}
Let \(\mathcal H\) be a connected, graded, cocommutative Hopf algebra over a field of characteristic \(0\). Then there is a canonical Hopf algebra isomorphism
\begin{equation}
    \mathcal H \cong U\bigl(\operatorname{Prim}(\mathcal H)\bigr).
\end{equation}
\end{theorem}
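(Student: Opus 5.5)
The plan is the classical route. Write $\mathfrak g\coloneqq\operatorname{Prim}(\mathcal H)$. First, $\mathfrak g$ is a Lie subalgebra under $[x,y]=x\star y-y\star x$: expanding $\Delta(x\star y)$ and $\Delta(y\star x)$ with $\Delta$ an algebra map, the cross terms $x\otimes y+y\otimes x$ cancel in the commutator. The universal property of $U(\mathfrak g)$ then extends the inclusion $\mathfrak g\hookrightarrow\mathcal H$ to an algebra map $\Phi\colon U(\mathfrak g)\to\mathcal H$. Both sides carry coproducts determined by making primitives primitive, and both coproducts are algebra maps. Since $U(\mathfrak g)$ is generated by $\mathfrak g$, the map $\Phi$ is a bialgebra morphism; it intertwines antipodes automatically, so it is a Hopf morphism. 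Canonicity is built in.

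\emph{Injectivity.} By Poincaré--Birkhoff--Witt (characteristic $0$), the symmetrisation $S(\mathfrak g)\to U(\mathfrak g)$ is a coalgebra isomorphism, and the primitives of $U(\mathfrak g)$ are exactly $\mathfrak g$. Now use the standard lemma: a coalgebra map out of a connected coalgebra that is injective on primitives is injective. If $\ker\Phi\neq0$, take a nonzero element of minimal coradical-filtration degree in it. Its reduced coproduct lies in $\ker\Phi\otimes U+U\otimes\ker\Phi$ at strictly lower filtration, hence vanishes. So the element is primitive, i.e.\ lies in $\mathfrak g$, where $\Phi$ is injective. This is a contradiction.

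\emph{Surjectivity.} This is the step I expect to be the main obstacle, and it is where cocommutativity and characteristic $0$ enter. I would induct on degree $n$, using the grading and connectedness $\mathcal H_0=\operatorname{span}\{1\}$. Assume $\mathcal H_{<n}\subseteq\operatorname{im}\Phi$ and take $h\in\mathcal H_n$. The reduced coproduct $\bar\Delta h=\Delta h-h\otimes1-1\otimes h$ lies in $\bigoplus_{0<m<n}\mathcal H_m\otimes\mathcal H_{n-m}$, which is in $\operatorname{im}\Phi\otimes\operatorname{im}\Phi$ by induction. The element $\bar\Delta h$ is a symmetric, coassociative $2$-cocycle for the reduced coalgebra. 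I would first try to show that on the cocommutative connected coalgebra $\operatorname{im}\Phi\cong S(\mathfrak g)$ such symmetric cocycles are coboundaries: there is $u\in U(\mathfrak g)$ of degree $n$ with $\bar\Delta\Phi(u)=\bar\Delta h$. This is essentially the vanishing of symmetric cohomology for symmetric coalgebras in characteristic $0$, proved with the Eulerian idempotent or the symmetrisation/averaging map with $1/n!$ factors. Then $h-\Phi(u)$ is primitive, hence lies in $\mathfrak g\subseteq\operatorname{im}\Phi$.

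An alternative for the same step, which I would fall back on, avoids the cocycle argument. Consider the convolution logarithm $e=\log^{\star}(\mathrm{id})=\sum_{k\ge1}\frac{(-1)^{k-1}}{k}(\mathrm{id}-\eta\varepsilon)^{\star k}$, a finite sum on each $\mathcal H_n$ by connectedness. Using cocommutativity, show that $e$ projects onto $\mathfrak g$ (the Eulerian idempotent), and that $\mathrm{id}=\exp^{\star}(e)=\sum_k\frac{1}{k!}\,\star\circ e^{\otimes k}\circ\Delta^{(k)}$. The right-hand side is visibly a sum of products of primitives, so $\mathcal H$ is generated by $\mathfrak g$ and $\Phi$ is surjective. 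The key check is that $e$ lands in primitives. For this I would use that, for cocommutative $\mathcal H$, convolution of coalgebra endomorphisms is again a coalgebra map, so $\log^{\star}$ of a coalgebra map is a coderivation-type map valued in primitives. Either route completes the isomorphism $\mathcal H\cong U(\operatorname{Prim}(\mathcal H))$.
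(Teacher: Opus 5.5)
The paper gives no proof of this statement. It cites Milnor--Moore as a black box, only to justify working in primitive coordinates. Your proposal is a correct, self-contained proof along standard lines:
\begin{enumerate*}[label=(\roman*)]
\item the canonical bialgebra (hence Hopf) map $\Phi\colon U(\mathfrak g)\to\mathcal H$;
\item injectivity from PBW in characteristic $0$, which gives $\operatorname{Prim}U(\mathfrak g)=\mathfrak g$, plus the injectivity lemma for connected coalgebras;
\item surjectivity by either of your two routes.
\end{enumerate*}
Each route needs about one more line than you wrote.

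\emph{Cocycle route.} The cleanest justification is cofreeness of $S(\mathfrak g)$ among conilpotent cocommutative coalgebras in characteristic $0$. The space $\operatorname{im}\Phi+\operatorname{span}\{h\}$ is a subcoalgebra of $\mathcal H$. The projection onto $\mathfrak g$, extended by $h\mapsto 0$, lifts to a coalgebra retraction $\psi$ onto $\operatorname{im}\Phi$. Then $\bar\Delta\psi(h)=\bar\Delta h$, so $h-\psi(h)$ is primitive.

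\emph{Eulerian route.} Note that $\mathrm{id}^{\star t}=\exp^{\star}(te)$ is polynomial in $t$ on each $\mathcal H_n$, and is a coalgebra map for $t\in\mathbb N$ by cocommutativity. Differentiating at $t=0$ gives $\Delta\circ e=(e\otimes\eta\varepsilon+\eta\varepsilon\otimes e)\circ\Delta$. This identity, not a coderivation identity, is what says $e$ is primitive-valued.

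Compared with the bare citation, your argument works directly in the sign-free graded setting of the paper's word and tree algebras. Milnor--Moore use the Koszul sign convention, so applying their theorem literally needs a regrading, e.g.\ doubling all degrees. Your Eulerian version also shows that the theorem rests on the same fact the paper uses for $\log_{\mathcal H}$: logarithms of group-like elements are primitive, here applied in the convolution algebra. The citation, in turn, buys brevity and the signed generality.
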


Consequently, we do not use the noncocommutative coordinate Hopf algebras \gls{bck} and \gls{mkw} directly for the primitive log-ODE construction. Instead, we use their cocommutative graded duals, namely \(\gls{gl}\) and \((\gls{mkw})^{\circ}\).

\subsection{Words and Stratonovich calculus}

\subsubsection{Words and Lyndon words}
\label{app:words}

Throughout this subsection, let \(A\) be a finite alphabet, typically \(A=\{1,\ldots,d\}\), with a fixed total order. A \emph{word} is a finite sequence \(w=i_1\cdots i_n\) with \(i_k\in A\), including the empty word \(1\) when \(n=0\). Its length is \(|w|=n\), concatenation is denoted by juxtaposition, and the order on \(A\) induces the lexicographic order on words.

\begin{definition}[Lyndon word]
\label[definition]{def:lyndon_word}
A nonempty word \(\ell\) is a \emph{Lyndon word} if it is strictly smaller, in lexicographic order, than each of its nontrivial proper suffixes.
\end{definition}

\begin{example}[Lyndon bracketing]
Let \(A=\{a,b,c,d\}\) with \(a<b<c<d\). The word
\(\ell=\texttt{abac}\) is Lyndon, since
\[
    \texttt{abac}<\texttt{bac},
    \qquad
    \texttt{abac}<\texttt{ac},
    \qquad
    \texttt{abac}<\texttt{c}.
\]
Its standard factorisation is \(\ell=uv\), with \(u=\texttt{ab}\) and \(v=\texttt{ac}\). Hence
\[
    [\ell]=[[u],[v]]
    =
    [[e_a,e_b],[e_a,e_c]].
\]
As \(\ell\) ranges over Lyndon words, these bracketings form the Lyndon basis of the free Lie algebra \(\operatorname{Prim}(\hopfshuf)\).
\end{example}

\subsubsection{Stratonovich rough paths on \(\mathbb{R}^n\): the tensor Hopf algebra}
\label{app:shuffle_hopf}

We view the geometric signature as a grouplike element of the cocommutative graded dual of the shuffle Hopf algebra,
\[
    \mathcal{H}_{\shuffle}^{\circ}
    \coloneqq
    \bigl(T(A),\,\star_{\shuffle},\,\Delta_{\mathrm{unsh}}\bigr),
\]
where \(\star_{\shuffle}\) is concatenation and \(\Delta_{\mathrm{unsh}}\) is the unshuffle coproduct. This is the graded dual of the usual shuffle Hopf algebra \((T(A),\shuffle,\Delta_{\mathrm{dec}})\). For words \(u=i_1\cdots i_m\) and \(v=j_1\cdots j_n\),
\[
    u\star_{\shuffle} v
    \coloneqq
    i_1\cdots i_m j_1\cdots j_n .
\]
The unshuffle coproduct is
\[
    \Delta_{\mathrm{unsh}}(i_1\cdots i_n)
    \coloneqq
    \sum_{I\subseteq [n]}
    i_I\otimes i_{I^{c}},
    \qquad
    i_I := i_{k_1}\cdots i_{k_{|I|}}
    \quad
    \text{for } I=\{k_1<\cdots<k_{|I|}\}.
\]
It is cocommutative, since swapping tensor factors corresponds to \(I\leftrightarrow I^{c}\). The primitives
\(\operatorname{Prim}(\mathcal H_{\shuffle}^{\circ})\) form the free Lie algebra on \(A\), with bracket induced by the commutator of \(\star_{\shuffle}\). Therefore the Hopf logarithm of any geometric signature increment is a Lie element, and may be expanded in the Lyndon basis.

\begin{example}[Smooth tensor signature]
Let \(X_t^a=t\) and \(X_t^b=t^2\) on \([0,1]\). Then
\[
    \langle S(X),a\rangle=\int_0^1 dX_t^a=1,
    \qquad
    \langle S(X),b\rangle=\int_0^1 dX_t^b=1,
\]
and
\[
    \langle S(X),ab\rangle
    =
    \int_{0<u<v<1} dX_u^a\,dX_v^b
    =
    \int_0^1 v\,2v\,dv
    =
    \frac{2}{3},
\]
while
\[
    \langle S(X),ba\rangle
    =
    \int_{0<u<v<1} dX_u^b\,dX_v^a
    =
    \int_0^1 v^2\,dv
    =
    \frac{1}{3}.
\]
Thus
\[
    \langle S(X),ab+ba\rangle
    =
    \langle S(X),a\rangle\langle S(X),b\rangle,
\]
as required by the shuffle character relation. With \(V_a=\partial_y\) and \(V_b=y\partial_y\), and with the convention that \(ab\) acts by \(V_aV_b\),
\[
    V_aV_b\varphi=\varphi'+y\varphi'',
    \qquad
    V_bV_a\varphi=y\varphi'',
\]
so the two tensor coordinates act on distinct differential operators even for this one-dimensional example.
\end{example}

\subsection{Rooted trees and It\^o calculus}

\subsubsection{Non-planar and planar rooted trees}
\label{app:rooted_trees}

\begin{definition}[Rooted tree]
\label[definition]{def:rooted_tree}
A rooted tree is a finite directed acyclic graph with a distinguished root and optional decoration by elements of an alphabet \(A\). The tree with root decorated by \(i\) and children \(\tau_1,\ldots,\tau_n\) is denoted
\begin{equation*}
    [\tau_1,\dots,\tau_n]_i
    =
    \begin{tree}
        \draw (0,0)\mynode{i} -- (-2.7ex,2.5ex)\mynode{$\tau_1$};
        \draw (0,0) -- (-.9ex,1.5ex);
        \draw (0,0) -- (0,1.5ex);
        \draw (0,0) -- (.9ex, 1.5ex);
        \draw (0.15,2.5ex) node[draw = none, fill = none]{$\dots$};
        \draw (0,0) -- (2.7ex,2.5ex)\mynode{$\tau_n$};
    \end{tree}.
\end{equation*}
The order \(|\tau|\) is the number of vertices, and a forest is a finite product of rooted trees.
\end{definition}

We distinguish \emph{non-planar} rooted trees, whose children are unordered, from \emph{planar} rooted trees, whose children are ordered from left to right. Thus, in the planar setting,
\[
    \cherry{i}{j}{k}\neq \cherry{i}{k}{j},
\]
whereas these two trees agree in the non-planar setting.

\subsubsection{Pre-Lie grafting and Euclidean elementary differentials}
\label{app:prelie_grafting}

The Euclidean branched theory is governed by a pre-Lie product. A pre-Lie algebra is a vector space with a bilinear product \(\triangleright\) satisfying
\[
    x\triangleright (y\triangleright z)
    -
    (x\triangleright y)\triangleright z
    =
    y\triangleright (x\triangleright z)
    -
    (y\triangleright x)\triangleright z .
\]
For vector fields on \(\mathbb R^n\), this product is
\[
    U\triangleright V
    \coloneqq
    DV[U].
\]
The free pre-Lie algebra is represented by non-planar rooted trees, with product given by grafting. For \(\tau,\sigma\in\mathcal{UT}\),
\[
    \tau \curvearrowright \sigma
    \coloneqq
    \sum_{v\in V(\sigma)}
    \tau\curvearrowright_v \sigma,
\]
where \(\tau\curvearrowright_v\sigma\) attaches the root of \(\tau\) to the vertex \(v\) of \(\sigma\). The Euclidean elementary differential map satisfies
\[
    F(\tomato{i})=V_i,
    \qquad
    F([\tau_1,\ldots,\tau_k]_i)
    =
    D^kV_i\bigl(F(\tau_1),\ldots,F(\tau_k)\bigr),
\]
and the grafting identity
\[
    F(\tau\curvearrowright\sigma)
    =
    F(\tau)\triangleright F(\sigma).
\]

\subsubsection{Branched rough paths in \(\mathbb{R}^n\): \gls{gl} as the cocommutative dual of \gls{bck}}
\label{app:gl_hopf}

It\^o integration introduces quadratic variation corrections, and word-indexed shuffle coordinates do not close under the It\^o chain rule. Branched rough paths restore closure by indexing the expansion by rooted trees \citep{hairer_geometric_2014}. The coordinate Hopf algebra is traditionally the noncocommutative \gls{bck} Hopf algebra, while the primitive log-ODE construction uses its cocommutative graded dual, the \gls{gl} Hopf algebra \citep{grossman_hopf-algebraic_1989}.

The \gls{gl} product is the Guin--Oudom product extending pre-Lie grafting. If \(\Delta_{\sqcup\sqcup}\alpha=\sum \alpha_{(1)}\otimes\alpha_{(2)}\) denotes the cocommutative unshuffle coproduct on forests, then
\[
    \alpha\star_{\mathrm{GL}}\beta
    =
    \sum
    \alpha_{(1)}
    \bigl(\alpha_{(2)}\curvearrowright \beta\bigr),
\]
with the standard extension of \(\curvearrowright\) from trees to forests \citep{oudom_lie_2004}. In particular, for trees \(\tau,\sigma\in\mathcal{UT}\),
\[
    \tau\star_{\mathrm{GL}}\sigma
    =
    \tau\sigma+\tau\curvearrowright\sigma.
\]
For example,
\[
    \tomato{m}\curvearrowright\ladderTwo{i}{j}
    =
    \ladderThree{i}{j}{m}
    +
    \cherry{i}{j}{m}.
\]
The Hopf algebra
\[
    \mathcal H_{\mathrm{GL}}
    =
    \bigl(S(\mathcal{UT}),\star_{\mathrm{GL}},\Delta_{\sqcup\sqcup}\bigr)
\]
is connected, graded, and cocommutative. Hence \cref{thm:milnor_moore} identifies it with the enveloping Hopf algebra of its primitive Lie algebra.

\begin{example}[Smooth branched signature]
Let \(X_t^a=t\) and \(X_t^b=t^2\) on \([0,1]\). The canonical smooth branched lift is defined recursively by
\[
    \langle \mathbf X_{0,t},\tomato{i}\rangle=X_t^i-X_0^i,
    \qquad
    \langle \mathbf X_{0,t},[\tau_1,\ldots,\tau_k]_i\rangle
    =
    \int_0^t
    \prod_{r=1}^k
    \langle \mathbf X_{0,s},\tau_r\rangle
    \,dX_s^i .
\]
Thus
\[
    \langle \mathbf X,\ladderTwo{b}{a}\rangle
    =
    \int_0^1 s\,d(s^2)
    =
    \frac{2}{3},
    \qquad
    \langle \mathbf X,\ladderTwo{a}{b}\rangle
    =
    \int_0^1 s^2\,ds
    =
    \frac{1}{3},
\]
and, since the non-planar forest product is commutative,
\[
    \langle \mathbf X,\cherry{b}{a}{b}\rangle
    =
    \int_0^1 s\,s^2\,d(s^2)
    =
    \frac{2}{5}.
\]
With \(V_a=\partial_y\) and \(V_b=y^2\partial_y\) on \(\mathbb R\),
\[
    F\left(\, \ladderTwo{b}{a} \right)
    =
    DV_b[V_a]
    =
    2y\partial_y,
    \qquad
    F\left(\, \cherry{b}{a}{b}\right)
    =
    D^2V_b[V_a,V_b]
    =
    2y^2\partial_y .
\]
The same rooted-tree coordinates therefore encode both the It\^o-type branched signature and the Euclidean elementary differentials appearing in the expansion.
\end{example}

\subsubsection{Post-Lie grafting and manifold elementary differentials}
\label{app:postlie_grafting}

On manifolds, the composition of differential operators is not represented by the Euclidean pre-Lie product alone. For a smooth connection \(\nabla\), the MKW elementary differentials used below are defined through total covariant derivatives. When \(\nabla\) has zero curvature and parallel torsion \(T\), this construction admits the post-Lie interpretation
\[
    U\triangleright V\coloneqq \nabla_UV,
    \qquad
    [U,V]_{\nabla}\coloneqq -T(U,V).
\]
The induced Lie bracket is
\[
    \llbracket U,V\rrbracket
    =
    U\triangleright V
    -
    V\triangleright U
    +
    [U,V]_{\nabla},
\]
which agrees with the Jacobi bracket of vector fields. Under the same flat/parallel-torsion assumptions, the post-Lie identities are
\[
    x\triangleright [y,z]_{\nabla}
    =
    [x\triangleright y,z]_{\nabla}
    +
    [y,x\triangleright z]_{\nabla},
\]
and
\[
    [x,y]_{\nabla}\triangleright z
    =
    x\triangleright (y\triangleright z)
    -
    (x\triangleright y)\triangleright z
    -
    y\triangleright (x\triangleright z)
    +
    (y\triangleright x)\triangleright z .
\]
Planar rooted trees give the free combinatorial model of the resulting post-Lie calculus. The relevant grafting operation is left grafting:
\[
    \tau \curvearrowright_{\ell} \sigma
    \coloneqq
    \sum_{v\in V(\sigma)}
    \tau\curvearrowright_{\ell,v}\sigma,
\]
where \(\tau\curvearrowright_{\ell,v}\sigma\) attaches the root of \(\tau\) to \(v\) as the left-most child. This convention records the order of covariant differentiations. At first order,
\[
    F\left(\, \ladderTwo{i}{j}\right)
    =
    V_j\triangleright V_i
    =
    \nabla_{V_j}V_i .
\]
For the manifold log-ODE construction, the essential algebraic property is the pseudo-bialgebra identity
\[
    \#(\alpha\star_{\mathrm{MKW}}\beta)
    =
    \#(\alpha)\circ\#(\beta),
\]
where \(\#\) is defined on ordered forests by total covariant derivatives. This identity holds for the MKW Hopf algebra with any smooth connection; the flat/parallel-torsion assumption is only needed when one wants to interpret the primitive structure as a post-Lie morphism.

\begin{example}[Second-order chain rule]
For one-node trees,
\[
    \tomato{j}\star_{\mathrm{MKW}}\tomato{i}
    =
    \tomato{j}\tomato{i}
    +
    \ladderTwo{i}{j}.
\]
Applying \(\#\) to a test function \(\varphi\in C^\infty(\mathcal M)\) gives
\[
    \#(\tomato{j}\star_{\mathrm{MKW}}\tomato{i})\varphi
    =
    \nabla^2\varphi(V_j,V_i)
    +
    (\nabla_{V_j}V_i)\varphi.
\]
By the covariant chain rule,
\[
    \nabla^2\varphi(V_j,V_i)
    +
    (\nabla_{V_j}V_i)\varphi
    =
    V_j(V_i\varphi).
\]
Thus the grafting term \(\ladderTwo{i}{j}\) is exactly the correction needed for \(\star_{\mathrm{MKW}}\) to represent composition of differential operators.
\end{example}

\subsubsection{Branched rough paths on \(\mathcal M\): the Munthe--Kaas--Wright Hopf algebra}
\label{app:mkw_hopf}

Let \(A\) denote the alphabet of driver labels, and let \(\operatorname{OF}(A)\) be the vector space spanned by \(A\)-decorated ordered forests. We write \(\tau\sigma\) for ordered concatenation of forests and
\[
    [\tau_1,\ldots,\tau_k]_i
\]
for the planar tree with root label \(i\) and ordered children \(\tau_1,\ldots,\tau_k\). Thus
\(
    \cherry{i}{j}{k}\neq \cherry{i}{k}{j}.
\)

Let \(\mathcal H_{\mathrm{MKW}}\) denote the MKW coordinate Hopf algebra of ordered forests. Its product is the shuffle product of ordered forests, and its coproduct is the full left-admissible-cut coproduct. Branched signatures on manifolds are characters of \(\mathcal H_{\mathrm{MKW}}\). For the log-ODE construction, we use the graded dual
\[
    \hopfmkw
    \coloneqq
    \mathcal H_{\mathrm{MKW}}^\circ
    =
    \bigoplus_{n\ge 0}\mathcal H_{\mathrm{MKW},n}^{*}.
\]
In this dual Hopf algebra, the coproduct is the cocommutative deshuffle coproduct on words of ordered forests, i.e. the sum over all order-preserving splittings into two subwords. Since \(\hopfmkw\) is connected, graded, and cocommutative over a field of characteristic zero, \cref{thm:milnor_moore} gives
\[
    \hopfmkw
    \cong
    U\bigl(\operatorname{Prim}(\hopfmkw)\bigr).
\]
The primitives form a Lie algebra under the commutator of \(\star_{\mathrm{MKW}}\). Equipped with the left-grafting operation, this primitive structure is the free post-Lie algebra on \(A\), which is the primitive algebra underlying the manifold log-ODE construction of \citet{kern_flow_2023}.

The product on \(\hopfmkw\) is the Guin--Oudom product associated with left grafting. If
\[
    \Delta_{\mathrm{unsh}}\alpha
    =
    \sum \alpha_{(1)}\otimes\alpha_{(2)}
\]
denotes the unshuffle coproduct, then
\[
    \alpha\star_{\mathrm{MKW}}\beta
    =
    \sum
    \alpha_{(1)}
    \bigl(
    \alpha_{(2)}\curvearrowright_{\ell}\beta
    \bigr),
\]
with the standard post-Lie extension of \(\curvearrowright_{\ell}\) from trees to ordered forests. In particular, for single trees \(\tau,\sigma\),
\[
    \tau\star_{\mathrm{MKW}}\sigma
    =
    \tau\sigma+\tau\curvearrowright_{\ell}\sigma.
\]
For example,
\[
    \tomato{m}\curvearrowright_{\ell}\ladderTwo{i}{j}
    =
    \ladderThree{i}{j}{m}
    +
    \cherry{i}{m}{j}.
\]
The second term records grafting at the root of \(\ladderTwo{i}{j}\), where the new child is inserted as the left-most child. Hence the planar order \(\cherry{i}{m}{j}\) is distinguished from \(\cherry{i}{j}{m}\).

\begin{example}[Smooth planarly branched signature]
Let \(X_t^a=t\) and \(X_t^b=t^2\) on \([0,1]\). For the smooth, ordered lift,
\[
    \langle \mathbf X_{0,t},\tomato{a}\tomato{b}\rangle
    =
    \int_{0<u<v<t}dX_u^a\,dX_v^b
    =
    \frac{2}{3}t^3,
\]
whereas
\[
    \langle \mathbf X_{0,t},\tomato{b}\tomato{a}\rangle
    =
    \int_{0<u<v<t}dX_u^b\,dX_v^a
    =
    \frac{1}{3}t^3.
\]
Consequently,
\[
    \langle \mathbf X,\cherry{b}{a}{b}\rangle
    =
    \int_0^1
    \langle \mathbf X_{0,t},\tomato{a}\tomato{b}\rangle
    \,dX_t^b
    =
    \frac{4}{15},
\]
while
\[
    \langle \mathbf X,\cherry{b}{b}{a}\rangle
    =
    \int_0^1
    \langle \mathbf X_{0,t},\tomato{b}\tomato{a}\rangle
    \,dX_t^b
    =
    \frac{2}{15}.
\]
Thus the ordered trees are distinguished by the signature. Their sum recovers
the corresponding unordered coefficient \(2/5\).
\end{example}

\begin{example}[Failure of ordinary planar cuts]
A naive extension of BCK to planar trees, using ordinary admissible cuts, does
not represent covariant elementary differentiation. It is enough to see this
at a point \(y\in\mathcal M\). Choose local vector fields \(V_i,V_j,V_m\), with
\(V_j(y)\ne 0\), whose first covariant derivatives at \(y\) satisfy
\[
    \nabla_{V_j}V_i=V_j,
    \qquad
    \nabla_{V_m}V_j=0,
    \qquad
    \nabla_{V_m}V_i=0,
    \qquad
    \nabla_{V_j}V_m=V_j,
\]
at \(y\). All identities below are evaluated at this point.
For MKW left grafting,
\[
    \tomato{m}\curvearrowright_{\ell}\ladderTwo{i}{j}
    =
    \cherry{i}{m}{j}
    + \,
    \ladderThree{i}{j}{m}.
\]
Under the elementary differential map this gives
\[
    F\left(\, \cherry{i}{m}{j}\right)+F\left(\, \ladderThree{i}{j}{m}\right)
    =
    \nabla^2V_i(V_m,V_j)
    +
    \nabla_{\nabla_{V_m}V_j}V_i.
\]
Using
\[
    \nabla^2V_i(U,W)
    =
    \nabla_U(\nabla_WV_i)
    -
    \nabla_{\nabla_UW}V_i,
\]
we obtain
\[
    \nabla^2V_i(V_m,V_j)
    +
    \nabla_{\nabla_{V_m}V_j}V_i
    =
    \nabla_{V_m}(\nabla_{V_j}V_i)
    =
    \nabla_{V_m}V_j
    =
    0.
\]
Thus MKW left grafting gives exactly the covariant derivative of
\(\nabla_{V_j}V_i\) in the \(V_m\)-direction.

By contrast, ordinary planar Connes--Kreimer cuts allow the right child of
\(\cherry{i}{j}{m}\) to be cut independently, so the graded dual also produces
the right-sibling insertion \(\cherry{i}{j}{m}\). Its elementary differential is
\[
    F\left(\, \cherry{i}{j}{m}\right)
    =
    \nabla^2V_i(V_j,V_m).
\]
But
\[
    \nabla^2V_i(V_j,V_m)
    =
    \nabla_{V_j}(\nabla_{V_m}V_i)
    -
    \nabla_{\nabla_{V_j}V_m}V_i
    =
    0-\nabla_{V_j}V_i
    =
    -V_j.
\]
Hence the ordinary planar extension contributes a nonzero elementary
differential to a composition for which the covariant chain rule gives zero.
\end{example}

\clearpage
\section{Further Mathematical Details}
\label{appx:mathematical_details}

\subsection{Vector Field Lift Pseudocode}
\label{appx:vf_lift_pseudocode}
In this section, we provide pseudocode for the vector field lifts of
\cref{subsec:neural_lift}. For vector fields $G,U_1,\dots,U_m$, write
\[
  \mathrm{TCD}_\nabla(G;U_1,\dots,U_m)
  :=
  (\nabla^m G)(U_1,\dots,U_m),
\]
where the total covariant derivative is given recursively by
\[
  \begin{aligned}
  (\nabla^0 G)(y) &= G(y),\\
  (\nabla^m G)(U_1,\dots,U_m)(y)
    &= \nabla_{U_1}\big((\nabla^{m-1}G)(U_2,\dots,U_m)\big)(y)\\
    &\quad
    - \sum_{r=2}^{m}
      (\nabla^{m-1}G)(U_2,\dots,\nabla_{U_1}U_r,\dots,U_m)(y).
  \end{aligned}
\]

\begin{minipage}[t]{0.49\linewidth}
\begin{algorithm}[H]
  \caption{$\hopfshuf$ Vector Field Lift}
  \label{alg:lyndon_lift_pseudocode}
  \begin{algorithmic}
\STATE {\bfseries Input:} driver vector fields $\{W_i\in\Gamma(T\mathcal M)\}_{i=0}^{d-1}$, represented in a chosen frame by coefficient functions $w_i:\mathcal M\to\mathbb R^n$, Hopf algebra $\mathcal H=\hopfshuf(d,N)$ with cached Lyndon basis $B$, geometry $(\mathcal M,\nabla)$
\STATE {\bfseries Precompute (once):} Lyndon words up to depth $N$; for each length-one basis index $k$, store its letter $r_k$; for each non-letter basis index $k$, store child indices $C_k=(p,q)$ from the standard factorisation
\STATE Initialise empty cache $\{F_k\}_{k=0}^{|B|-1}$

\STATE {\bfseries Memoised builder:} $\operatorname{Build}(k)$ returns cached $F_k$ if present
\STATE $C_k \gets B.\mathrm{children}[k]$
\IF{$C_k=()$}
  \STATE $i \gets r_k$
  \STATE $F_k \gets W_i$
\ELSE
  \STATE $(p,q)\gets C_k$
  \STATE $F_p\gets\operatorname{Build}(p),\;\; F_q\gets\operatorname{Build}(q)$
  \STATE $F_k\gets [F_p,F_q]_{\mathrm{Jac}}$
  \STATE \hspace{1em} equivalently, if using torsion \(T\), $F_k=\nabla_{F_p}F_q-\nabla_{F_q}F_p-T(F_p,F_q)$
\ENDIF
    \STATE Cache and return $F_k$

    \FOR{$k=0$ {\bfseries to} $|B|-1$}
      \STATE $F_k\gets\operatorname{Build}(k)$
    \ENDFOR

    \STATE {\bfseries Output:} lifted fields $\{F_k\}_{k=0}^{|B|-1}$ aligned with the truncated Lyndon basis of $\hopfshuf$
  \end{algorithmic}
\end{algorithm}
\end{minipage}
\hfill
\begin{minipage}[t]{0.49\linewidth}
\begin{algorithm}[H]
  \caption{$\hopfgl$, $\hopfmkw$ Vector Field Lift}
  \label{alg:tree_lift_pseudocode}
  \begin{algorithmic}
    \STATE {\bfseries Input:} driver vector fields $\{W_i\in\Gamma(T\mathcal M)\}_{i=0}^{d-1}$, represented in a chosen frame by coefficient functions $w_i:\mathcal M\to\mathbb R^n$, Hopf algebra $\mathcal H\in\{\hopfgl(d,N),\hopfmkw(d,N)\}$ with cached rooted-tree basis $B$, geometry $(\mathcal M,\nabla)$
    \STATE {\bfseries Convention:} for $\hopfgl$, use the Euclidean or flat torsion-free setting so that unordered children give symmetric elementary differentials; for $\hopfmkw$, use ordered children and total covariant derivatives for the chosen connection \(\nabla\)
    \STATE {\bfseries Precompute (once):} enumerate canonical non-planar rooted trees for $\hopfgl$ or planar rooted trees for $\hopfmkw$ up to depth $N$; for each basis index $k$, store root colour $r_k$ and child indices $C_k$
    \STATE Initialise empty cache $\{F_k\}_{k=0}^{|B|-1}$

    \STATE {\bfseries Memoised builder:} $\operatorname{Build}(k)$ returns cached $F_k$ if present
    \STATE $C_k \gets B.\mathrm{children}[k]$, \quad $i\gets r_k$
    \IF{$C_k=()$}
      \STATE $F_k\gets W_i$
    \ELSE
      \STATE $(p_1,\dots,p_m)\gets C_k$
      \STATE $F_{p_j}\gets\operatorname{Build}(p_j)$ for $j=1,\dots,m$
      \IF{$\mathcal H=\hopfgl$}
        \STATE $F_k(y)\gets D^m w_i(y)[F_{p_1}(y),\dots,F_{p_m}(y)]$
      \ELSE
        \STATE $F_k(y)\gets\mathrm{TCD}_\nabla(W_i;F_{p_1},\dots,F_{p_m})(y)$
      \ENDIF
    \ENDIF
    \STATE Cache and return $F_k$

    \FOR{$k=0$ {\bfseries to} $|B|-1$}
      \STATE $F_k\gets\operatorname{Build}(k)$
    \ENDFOR

    \STATE {\bfseries Output:} lifted fields $\{F_k\}_{k=0}^{|B|-1}$ aligned with the truncated rooted-tree basis of $\hopfgl$ or $\hopfmkw$
  \end{algorithmic}
\end{algorithm}
\end{minipage}

\newpage

\subsection{Finite-grid role of bracket channels}
\label{app:finite_grid_bracket_kernel}

This appendix formalises the finite-grid distinction between the geometric
signature kernel and the proposed branched signature kernel. The claim is not
that geometric signatures have a different infinite-resolution
law-identifiability target. Rather, at a fixed observation grid, a geometric
kernel computed from path values is a kernel on the projected path law, while
our branched kernel is computed on the enhanced law containing quadratic
variation and covariation coordinates.

Let $\pi_n=\{0=t_0<t_1<\cdots<t_{N_n}=T\}$ be the observation grid. Define
$\Omega_n:=(\mathbb R^d)^{N_n+1}$,
$\mathcal A_n:=(\mathrm{Sym}(d))^{N_n+1}$, and
$\bar\Omega_n:=\Omega_n\times\mathcal A_n$. We write
$\bar x=(x,a)\in\bar\Omega_n$, where
$x=(x_{t_0},\ldots,x_{t_{N_n}})$ are path values and
$a=(a_{t_0},\ldots,a_{t_{N_n}})$ are bracket values. For a semimartingale
$X$, the supplied enhanced observation is $(X_{\pi_n},A_{\pi_n})$, where
$A_t:=\langle X\rangle_t$. Let $\Pi:\bar\Omega_n\to\Omega_n$ be the projection
$\Pi(x,a)=x$. Throughout, $|\cdot|$ denotes the Euclidean norm on
$\mathbb R^d$ and the Frobenius norm on $\mathrm{Sym}(d)$.

For a kernel $k(u,v)=\langle\varphi(u),\varphi(v)\rangle_{\mathcal K}$, write
\[
    \mathrm{MMD}_k(P,Q)
    :=
    \left\|
        \mathbb E_{U\sim P}\varphi(U)
        -
        \mathbb E_{V\sim Q}\varphi(V)
    \right\|_{\mathcal K}.
\]
In the training objective, we omit the data--data term in the squared MMD
since it is constant in $\theta$.

\begin{proposition}[Projected geometric kernels and bracket visibility]
\label{prop:projected_geo_and_branched_visibility}
Let $k_{\mathrm{geo}}^N$ be a finite-depth geometric signature kernel computed
from the piecewise-linear interpolation of $x\in\Omega_n$, with feature map
$\varphi_{\mathrm{geo}}^N:\Omega_n\to\mathcal H_{\mathrm{br}}^N$. Extend it
to enhanced observations by ignoring brackets,
$\bar k_{\mathrm{geo}}^N((x,a),(y,b)):=k_{\mathrm{geo}}^N(x,y)$. Then, for
any enhanced laws $P,Q\in\mathcal P(\bar\Omega_n)$,
\[
    \mathrm{MMD}_{\bar k_{\mathrm{geo}}^N}(P,Q)
    =
    \mathrm{MMD}_{k_{\mathrm{geo}}^N}(\Pi_\#P,\Pi_\#Q).
\]
In particular, if $\Pi_\#P=\Pi_\#Q$, then
$\mathrm{MMD}_{\bar k_{\mathrm{geo}}^N}(P,Q)=0$.

Let $\varphi_{\mathrm{br}}^N:\bar\Omega_n\to\mathcal H_{\mathrm{br}}^N$ be
the truncated branched, or bracket-augmented, feature map, and let
$k_{\mathrm{br}}^N(\bar x,\bar y)
=\langle\varphi_{\mathrm{br}}^N(\bar x),
\varphi_{\mathrm{br}}^N(\bar y)\rangle_{\mathcal H_{\mathrm{br}}^N}$. Suppose that a scalar bracket statistic $b:\bar\Omega_n\to\mathbb R$ appears as a
linear coordinate of this feature map: there exists
$v_b\in\mathcal H_{\mathrm{br}}^N$ such that
$b(\bar x)=\langle\varphi_{\mathrm{br}}^N(\bar x),v_b\rangle$ for all
$\bar x\in\bar\Omega_n$. Then
\[
    \bigl|\mathbb E_P b-\mathbb E_Q b\bigr|
    \le
    \|v_b\|_{\mathcal H_{\mathrm{br}}^N}
    \mathrm{MMD}_{k_{\mathrm{br}}^N}(P,Q).
\]
Consequently, if $\Pi_\#P=\Pi_\#Q$ but $\mathbb E_P b\ne\mathbb E_Q b$, then
the path-only geometric MMD is zero while the branched MMD is strictly
positive.
\end{proposition}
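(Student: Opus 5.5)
The plan is to argue directly from the feature-map definition of MMD. No earlier structural result is needed beyond the fact that both kernels are inner products of finite-dimensional feature maps, so every expectation below is a Bochner integral in a finite-dimensional Hilbert space. I will assume throughout that the relevant feature maps are integrable under $P$ and $Q$; this is automatic if the laws have enough moments for the truncated signatures. Under that assumption, mean embeddings commute with pushforwards and with continuous linear functionals.

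\emph{First claim.} The extended kernel $\bar k_{\mathrm{geo}}^N$ has feature map $\bar\varphi:=\varphi_{\mathrm{geo}}^N\circ\Pi$. Hence $\mathbb E_{U\sim P}\bar\varphi(U)=\mathbb E_{U\sim P}\varphi_{\mathrm{geo}}^N(\Pi U)=\mathbb E_{V\sim\Pi_\#P}\varphi_{\mathrm{geo}}^N(V)$ by the change-of-variables formula for pushforward measures. The same identity holds for $Q$. Subtracting and taking norms gives the stated equality of MMDs. The ``in particular'' statement follows because equal pushforwards give equal mean embeddings.

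\emph{Second claim.} By linearity of expectation and continuity of $\langle\cdot,v_b\rangle$, we have $\mathbb E_P b-\mathbb E_Q b=\langle \mathbb E_P\varphi_{\mathrm{br}}^N-\mathbb E_Q\varphi_{\mathrm{br}}^N,\,v_b\rangle$. Cauchy--Schwarz then yields the bound $\|v_b\|\,\mathrm{MMD}_{k_{\mathrm{br}}^N}(P,Q)$.

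\emph{Consequence.} Suppose $\Pi_\#P=\Pi_\#Q$ and $\mathbb E_Pb\neq\mathbb E_Qb$. The first claim gives zero geometric MMD. In the second claim the left side is positive, which forces $v_b\neq0$ (otherwise $b\equiv 0$) and $\mathrm{MMD}_{k_{\mathrm{br}}^N}(P,Q)>0$.

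The only delicate point is the integrability and well-definedness of the mean embeddings, in particular that $\mathbb E_P b$ is finite. I would handle this by stating explicitly the standing assumption that $\varphi_{\mathrm{br}}^N$ and $\varphi_{\mathrm{geo}}^N$ are $P$- and $Q$-integrable. Under that assumption the whole argument is elementary.
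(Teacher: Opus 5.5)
Your proposal is correct and follows essentially the same route as the paper. You factor the extended geometric feature map through $\Pi$, so the mean embeddings coincide with those of $\Pi_\#P,\Pi_\#Q$, and then write $\mathbb E_P b-\mathbb E_Q b=\langle\mu_P-\mu_Q,v_b\rangle$ and apply Cauchy--Schwarz; your explicit integrability assumption on the feature maps is a harmless addition that the paper leaves implicit.
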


\begin{proof}
The lifted geometric feature map is
$\bar\varphi_{\mathrm{geo}}^N(x,a):=\varphi_{\mathrm{geo}}^N(x)
=\varphi_{\mathrm{geo}}^N(\Pi(x,a))$. Hence
\[
    \mathbb E_{\bar X\sim P}\bar\varphi_{\mathrm{geo}}^N(\bar X)
    =
    \mathbb E_{X\sim\Pi_\#P}\varphi_{\mathrm{geo}}^N(X),
\]
and the same identity holds with $Q$ in place of $P$. Taking the Hilbert norm
of the difference gives
\[
    \mathrm{MMD}_{\bar k_{\mathrm{geo}}^N}(P,Q)
    =
    \mathrm{MMD}_{k_{\mathrm{geo}}^N}(\Pi_\#P,\Pi_\#Q).
\]
If $\Pi_\#P=\Pi_\#Q$, the right-hand side is zero.

For the branched claim, define
$\mu_P:=\mathbb E_{\bar X\sim P}\varphi_{\mathrm{br}}^N(\bar X)$ and
$\mu_Q:=\mathbb E_{\bar Y\sim Q}\varphi_{\mathrm{br}}^N(\bar Y)$. Since
$b(\bar x)=\langle\varphi_{\mathrm{br}}^N(\bar x),v_b\rangle$,
\[
    \mathbb E_P b-\mathbb E_Q b
    =
    \langle\mu_P-\mu_Q,v_b\rangle .
\]
Cauchy's inequality gives
$\bigl|\mathbb E_P b-\mathbb E_Q b\bigr|
\le \|\mu_P-\mu_Q\|\|v_b\|$, and
$\|\mu_P-\mu_Q\|=\mathrm{MMD}_{k_{\mathrm{br}}^N}(P,Q)$. If
$\mathbb E_Pb\ne\mathbb E_Qb$, this inequality forces
$\mathrm{MMD}_{k_{\mathrm{br}}^N}(P,Q)>0$.
\end{proof}

This is a finite-depth statement: we only claim visibility of the bracket
coordinates included in $\varphi_{\mathrm{br}}^N$, not characteristicness of
$k_{\mathrm{br}}^N$ for arbitrary laws on $\bar\Omega_n$.

The previous proposition compares the actual geometric and branched kernels.
A separate question is what happens when bracket channels are not supplied. In
that case, a grid-only method can become bracket-aware only by reconstructing
brackets from the observed path values.

Let $R_n:\Omega_n\to\mathcal A_n$ be a bracket estimator. The realised
quadratic covariation estimator is the canonical example:
\[
    [R_n(x)]_{t_m}
    :=
    \sum_{k=0}^{m-1}
    (x_{t_{k+1}}-x_{t_k})(x_{t_{k+1}}-x_{t_k})^\top,
    \qquad m=0,\ldots,N_n.
\]
Define the supplied and reconstructed enhanced laws by
$\bar P_n^{\mathrm{sup}}:=\mathrm{Law}(X_{\pi_n},A_{\pi_n})$ and
$\bar P_n^{\mathrm{rec}}:=\mathrm{Law}(X_{\pi_n},R_n(X_{\pi_n}))$.

\begin{proposition}[Cost of reconstructing bracket coordinates]
\label{prop:bracket_reconstruction_cost}
Equip $\bar\Omega_n$ with
$d_{\bar\Omega_n}((x,a),(y,b))
:=\max_m|x_{t_m}-y_{t_m}|+\max_m|a_{t_m}-b_{t_m}|$. Let
$\mathcal D_n\subseteq\bar\Omega_n$ contain the supports of
$\bar P_n^{\mathrm{sup}}$ and $\bar P_n^{\mathrm{rec}}$. If
$f:\bar\Omega_n\to\mathbb R$ is $L_f$-Lipschitz on $\mathcal D_n$, then
\[
    \left|
        \mathbb E f(X_{\pi_n},R_n(X_{\pi_n}))
        -
        \mathbb E f(X_{\pi_n},A_{\pi_n})
    \right|
    \le
    L_f\,
    \mathbb E\max_m
    \left|R_n(X_{\pi_n})_{t_m}-A_{t_m}\right|.
\]
If $\varphi_{\mathrm{br}}^N$ is $L_N$-Lipschitz on $\mathcal D_n$ with respect
to $d_{\bar\Omega_n}$, then
\[
    \mathrm{MMD}_{k_{\mathrm{br}}^N}
    (\bar P_n^{\mathrm{rec}},\bar P_n^{\mathrm{sup}})
    \le
    L_N\,
    \mathbb E\max_m
    \left|R_n(X_{\pi_n})_{t_m}-A_{t_m}\right|.
\]
\end{proposition}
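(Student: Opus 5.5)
The plan is a synchronous coupling of the two enhanced laws: both are built from the same random grid path $X_{\pi_n}$. Set $\bar X^{\mathrm{rec}}:=(X_{\pi_n},R_n(X_{\pi_n}))$ and $\bar X^{\mathrm{sup}}:=(X_{\pi_n},A_{\pi_n})$ on one probability space. Their marginals are $\bar P_n^{\mathrm{rec}}$ and $\bar P_n^{\mathrm{sup}}$. Both take values in $\mathcal D_n$ almost surely, because $\mathcal D_n$ contains the supports of both laws. Under $d_{\bar\Omega_n}$ the path components cancel exactly, which leaves
\[
  d_{\bar\Omega_n}\bigl(\bar X^{\mathrm{rec}},\bar X^{\mathrm{sup}}\bigr)
  =\max_m\bigl|R_n(X_{\pi_n})_{t_m}-A_{t_m}\bigr|.
\]

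For the first inequality, I will move the absolute value inside the expectation: $|\mathbb E f(\bar X^{\mathrm{rec}})-\mathbb E f(\bar X^{\mathrm{sup}})|\le \mathbb E|f(\bar X^{\mathrm{rec}})-f(\bar X^{\mathrm{sup}})|$. Then I apply the $L_f$-Lipschitz bound on $\mathcal D_n$ pointwise and substitute the display above. Integrability needs a brief check. If the right-hand side is infinite there is nothing to prove. Otherwise, Lipschitz continuity gives $|f(\bar X^{\mathrm{rec}})|\le |f(\bar X^{\mathrm{sup}})|+L_f\max_m|\cdot|$. I will state explicitly that $f$ is assumed integrable under one of the two laws, which is implicit in the statement's use of $\mathbb E f$.

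For the MMD bound, I will use the same template but with vector-valued (Bochner) expectations in the finite-dimensional Hilbert space $\mathcal H_{\mathrm{br}}^N$. By definition,
\[
  \mathrm{MMD}_{k_{\mathrm{br}}^N}(\bar P_n^{\mathrm{rec}},\bar P_n^{\mathrm{sup}})
  =\bigl\|\mathbb E\,\varphi_{\mathrm{br}}^N(\bar X^{\mathrm{rec}})-\mathbb E\,\varphi_{\mathrm{br}}^N(\bar X^{\mathrm{sup}})\bigr\|,
\]
and this only depends on the marginals, so the coupling can be used. I will combine the two expectations and apply Jensen's inequality to the norm, giving the bound $\mathbb E\|\varphi_{\mathrm{br}}^N(\bar X^{\mathrm{rec}})-\varphi_{\mathrm{br}}^N(\bar X^{\mathrm{sup}})\|$. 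The $L_N$-Lipschitz hypothesis on $\mathcal D_n$ together with the display then gives the result.

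An equivalent route is to apply the first part to $f=\langle\varphi_{\mathrm{br}}^N(\cdot),u\rangle$ with $\|u\|\le1$, which is $L_N$-Lipschitz, and then take the supremum over $u$. The main obstacle is purely technical: the mean embeddings must be well defined, meaning $\varphi_{\mathrm{br}}^N$ must be integrable under both laws. I will handle this the same way as in the scalar case, using Lipschitz growth relative to one integrable law, or by adopting the standing convention that the MMD is finite. No deeper step is needed.
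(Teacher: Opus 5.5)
Your proposal is correct and takes the same approach as the paper's proof: couple the two laws through the common grid path $X_{\pi_n}$, so the path term in $d_{\bar\Omega_n}$ cancels; apply the Lipschitz bound for $f$ pointwise on $\mathcal D_n$; and bound the MMD by moving the norm inside the expectation via Jensen's inequality, then using Lipschitzness of $\varphi_{\mathrm{br}}^N$. Your integrability remarks and the alternative dual route through $f=\langle\varphi_{\mathrm{br}}^N(\cdot),u\rangle$ with $\|u\|\le 1$ are harmless additions that the paper leaves implicit.
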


\begin{proof}
Couple the supplied and reconstructed enhanced observations by using the same
grid path $X_{\pi_n}$. Since their path coordinates are identical,
\[
    d_{\bar\Omega_n}
    \bigl((X_{\pi_n},R_n(X_{\pi_n})),(X_{\pi_n},A_{\pi_n})\bigr)
    =
    \max_m\left|R_n(X_{\pi_n})_{t_m}-A_{t_m}\right|.
\]
The two coupled observations lie in $\mathcal D_n$ almost surely, so the
Lipschitz bound for $f$ gives the first claim after taking expectations.

For the MMD bound, Jensen's inequality and Lipschitzness of
$\varphi_{\mathrm{br}}^N$ on $\mathcal D_n$ give
\[
\begin{aligned}
    \mathrm{MMD}_{k_{\mathrm{br}}^N}
    (\bar P_n^{\mathrm{rec}},\bar P_n^{\mathrm{sup}})
    &\le
    \mathbb E
    \left\|
        \varphi_{\mathrm{br}}^N(X_{\pi_n},R_n(X_{\pi_n}))
        -
        \varphi_{\mathrm{br}}^N(X_{\pi_n},A_{\pi_n})
    \right\| \\
    &\le
    L_N\,
    \mathbb E\max_m
    \left|R_n(X_{\pi_n})_{t_m}-A_{t_m}\right|.
\end{aligned}
\]
\end{proof}

\begin{remark}[Reconstruction is not the geometric baseline]
\label{rem:reconstruction_not_geo_baseline}
The reconstructed law $\bar P_n^{\mathrm{rec}}$ is not the geometric-kernel
baseline. The geometric kernel baseline simply ignores the bracket channel, as
shown in Proposition~\ref{prop:projected_geo_and_branched_visibility}. The
reconstructed law is a hypothetical bracket-aware baseline built from
grid samples alone. Proposition~\ref{prop:bracket_reconstruction_cost} shows
that such a baseline pays a separate bracket-estimation error. When $R_n$ is
realised quadratic covariation, this error is of
$\sqrt{\Delta_n}$ scale under standard continuous-semimartingale assumptions \citep[Thm.~2.12(ii)]{jacod_asymptotic_2008}. Supplying analytic bracket increments avoids this reconstruction step.
\end{remark}

\begin{remark}[Scope and downstream marginal errors]
\label{rem:scope_and_downstream_marginal_errors}
Geometric signatures can be law-determining in the infinite-depth limit
\citep{chevyrev_signature_2018,chevyrev_signature_2022}. Accordingly, our
claims are finite-grid claims: at a fixed observation grid, a path-only
geometric kernel is a function of \(\Pi_\#P\), whereas the branched kernel used
in this work is a function of the enhanced law \(P\) on \(\bar\Omega_n\).

The same reconstruction term can propagate to downstream finite-dimensional
statistics. If a numerical solver or observation map
\(F_n:\bar\Omega_n\to C([0,T];\mathbb R^m)\) is locally Lipschitz on the
enhanced observations under consideration, then for any Lipschitz statistic
\(\Psi:C([0,T];\mathbb R^m)\to\mathbb R^r\) and any unit linear functional
\(\ell:\mathbb R^r\to\mathbb R\), the function
\(f=\ell\circ\Psi\circ F_n\) is Lipschitz on the same set. Proposition
\ref{prop:bracket_reconstruction_cost} then gives an elementary finite-grid
propagation bound for such downstream statistics.
\end{remark}

\begin{proposition}[Primitives lift to vector fields]\label[proposition]{appx:prop_primitive_vector_field}
Let $\mathcal H$ be a bialgebra and let $F:\mathcal H\to D$ be a pseudo-bialgebra map in the
sense of \cref{def:pseudo_bialgebra_map}. If $p\in\mathcal H$ is primitive, i.e.
$\Delta p=p\otimes 1+1\otimes p$, then $F(p)\in\Gamma(TM)$.
\end{proposition}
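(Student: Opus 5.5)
The plan is to show that $F(p)$ is a derivation of $C^\infty(\mathcal M)$. A linear differential operator that satisfies the Leibniz rule is exactly a vector field, via the standard identification $\Gamma(T\mathcal M)\cong\mathrm{Der}(C^\infty(\mathcal M))$.

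\textbf{Step 1: Leibniz rule.} Fix $\phi,\psi\in C^\infty(\mathcal M)$. Insert the primitive coproduct $\Delta p=p\otimes 1+1\otimes p$ into the second condition of \cref{def:pseudo_bialgebra_map}. This gives
\[
F(p)(\phi\psi)=m\circ(F\otimes F)(p\otimes 1+1\otimes p)(\phi\otimes\psi)=F(p)(\phi)\,F(1)(\psi)+F(1)(\phi)\,F(p)(\psi).
\]

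\textbf{Step 2: $F(1)$ is the identity.} Multiplicativity gives $F(1)=F(1\star 1)=F(1)\circ F(1)$, so $F(1)$ is only known to be an idempotent; this is the main obstacle. There are two ways to settle it.
\begin{enumerate*}[label=(\roman*)]
\item Apply the coproduct condition to $\tau=1$ with $\Delta 1=1\otimes 1$. This gives $F(1)(\phi\psi)=F(1)\phi\cdot F(1)\psi$, so $F(1)$ is an algebra endomorphism of $C^\infty(\mathcal M)$ that is also a linear differential operator. A multiplicative differential operator has order zero, because its Leibniz defects vanish. Hence it is multiplication by a function $c$ with $c=c^2$. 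If one assumes the normalisation $F(1)=\mathrm{id}$, which holds for the concrete maps $F_{\mathcal W}$ used in the paper since the empty forest acts as the identity, then $c\equiv 1$.
\item Avoid the issue altogether by adopting the convention of \citet{kern_flow_2023} that pseudo-bialgebra maps are unital. I would state this convention explicitly in the proof.
\end{enumerate*}

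\textbf{Step 3: conclusion.} With $F(1)=\mathrm{id}$, Step 1 reduces to $F(p)(\phi\psi)=F(p)(\phi)\,\psi+\phi\,F(p)(\psi)$. So $F(p)$ is an $\mathbb R$-linear derivation of $C^\infty(\mathcal M)$. Setting $\phi=\psi=1$ gives $F(p)(1)=0$, so $F(p)$ has no zeroth-order term. A derivation is local, and in a chart it equals $\sum_i F(p)(x^i)\,\partial_i$, which is proved via Hadamard's lemma. Therefore $F(p)\in\Gamma(T\mathcal M)$.
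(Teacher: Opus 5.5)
Your proof is correct and follows the paper's route: you insert $\Delta p=p\otimes 1+1\otimes p$ into the coproduct-compatibility condition, use $F(1)=\mathrm{id}$ to get the Leibniz rule, and identify derivations of $C^\infty(\mathcal M)$ with vector fields. You are more careful than the paper, which asserts that multiplicativity alone gives $F(1)=\mathrm{Id}$ when it only gives idempotence; your route (i) in fact closes without any normalisation, because $F(p)=F(1\star p)=F(1)\circ F(p)=cF(p)$ lets you absorb both factors of $c$ in Step 1.
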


\begin{proof}\label{appx:proof_leibniz_derivation}
It suffices to show that $F(p)$ is a derivation of \(C^\infty(M)\).
Fix arbitrary $\phi,\psi\in C^\infty(M)$. By property (b) of
\cref{def:pseudo_bialgebra_map},
\[
    F(p)(\phi\cdot\psi)
    =
    F(\Delta p)(\psi\otimes\phi),
\]
where, for simple tensors, \(F(a\otimes b)(\psi\otimes\phi)\) denotes
\((F(a)\psi)\cdot(F(b)\phi)\). Since \(p\) is primitive,
\[
\begin{aligned}
    F(\Delta p)(\psi\otimes\phi)
    &=
    F(p\otimes 1+1\otimes p)(\psi\otimes\phi)\\
    &=
    (F(p)\psi)\cdot F(1)\phi
    +
    F(1)\psi\cdot (F(p)\phi).
\end{aligned}
\]
Property (a) gives \(F(1)=\mathrm{Id}\). Hence
\[
\begin{aligned}
    F(p)(\phi\cdot\psi)
    &=
    (F(p)\psi)\cdot\phi+\psi\cdot(F(p)\phi)\\
    &=
    (F(p)\phi)\cdot\psi+\phi\cdot(F(p)\psi),
\end{aligned}
\]
where the last equality uses commutativity of pointwise multiplication. Thus
\(F(p)\) satisfies the Leibniz rule, and therefore \(F(p)\in\Gamma(TM)\).
\end{proof}

\section{Further Experiments}
\subsection{Recovering Log-NCDE}
\label{appx:recovering_log_ncde}
As stated in \cref{sec:prelim_logode}, BNRDE recovers log-NCDE when \(\mathcal{H} = \hopfshuf\) and \(\mathcal{M} = \mathbb{R}\). Results on PPG-DaLiA \cite{Reiss2019} confirm this, with both models attaining an identical 0.134 \gls{mse} after 10 epochs.

\subsection{Signature Computation Times}
We report signature precomputation times for the relevant pySigLib log signature computations below, in \cref{fig:signature_computation_time}. 
\begin{figure}[h]
    \centering

    \begin{subfigure}{0.32\textwidth}
        \centering
        \includegraphics[width=\linewidth]{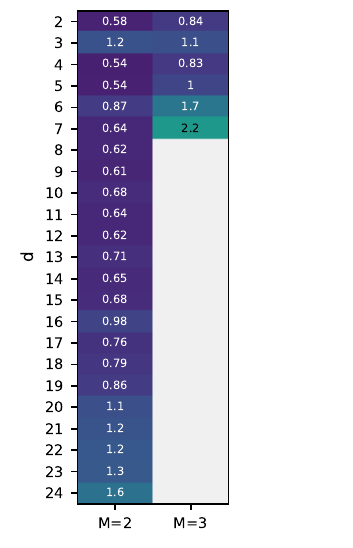}
        \caption{Planarly branched log-signature, $\hopfmkw$.}
        \label{fig:planar_logsig_comp_time}
    \end{subfigure}
    \hfill
    \begin{subfigure}{0.32\textwidth}
        \centering
        \includegraphics[width=\linewidth]{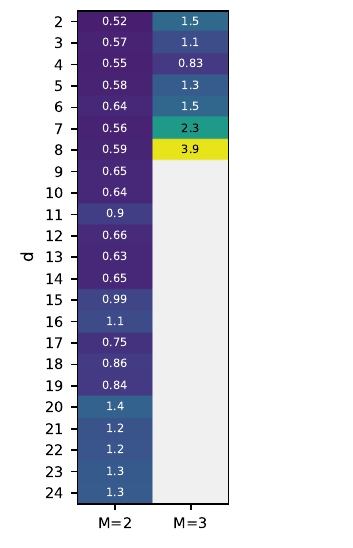}
        \caption{Branched log-signature, $\hopfgl$.}
        \label{fig:nonplanar_logsig_comp_time}
    \end{subfigure}
    \hfill
    \begin{subfigure}{0.32\textwidth}
        \centering
        \includegraphics[width=\linewidth]{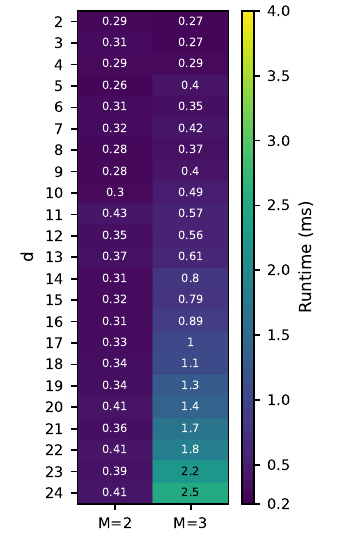}
        \caption{Log-signature, $\hopfshuf$.}
        \label{fig:logsig_comp_time}
    \end{subfigure}

    \caption{Milliseconds to compute log-signatures for $\mathcal{H} \in \{\hopfshuf, \hopfgl, \hopfmkw\}$ with path dimension $d \in [2, 24]$ and signature truncation depth $N \in \{2,3\}$. The custom CUDA kernels implemented in pySigLib do not support tree counts greater than 1024, producing the blank cells in \cref{fig:planar_logsig_comp_time} and \cref{fig:nonplanar_logsig_comp_time}.}
    \label{fig:signature_computation_time}
\end{figure}

\section{Experimental Details}
\label{app:experimental_details}
\subsection{Software and Hardware Details}
\label{appx:software_hardware_details}
\paragraph{Software details} All experiments were conducted on Python 3.13 using \href{https://github.com/jax-ml/jax}{JAX} 0.8.1 \cite{bradbury_jax_2018}. We use \href{https://github.com/patrick-kidger/diffrax}{Diffrax} 0.7.1 \cite{kidger_neural_2021} for its Euclidean differential equation framework, \href{https://github.com/patrick-kidger/equinox}{Equinox} 0.13.2 \cite{kidger_equinox_2021} as our neural network framework, \href{https://github.com/google-deepmind/optax}{Optax} 0.2.6 \cite{deepmind_deepmind_2020} for its implementation of the Muon optimizer \cite{jordanMuonOptimizerHidden2024}, and \href{https://github.com/smorad/cyreal}{Cyreal} 0.1.5 \cite{morad_smoradcyreal_2026} for dataloading. All $\log_\mathcal{H}$-signatures are formed using a \href{https://github.com/luke-a-thompson/pySigLibax}{custom PySigLib fork} \cite{shmelev_pysiglib_2025}, with the log-ODE method provided by \href{https://github.com/luke-a-thompson/roughrax}{Roughrax} and geometric numerical integrators provided by \href{https://github.com/luke-a-thompson/georax}{Georax} \cite{shmelev2026expliciteffectivelysymmetricschemes}.

\paragraph{Released packages} We release two JAX packages to support the present work: one for the log-ODE method on manifolds and one containing the benchmark datasets used.

\begin{table}[H]
    \centering
    \caption{New packages introduced to support the present work.}
    \label{tab:new_packages}
    \begin{tabular}{llll}
        \toprule
        \textbf{Package} & \textbf{Ecosystem} & \textbf{Functionality} & \textbf{License} \\
        \midrule
        \href{https://github.com/luke-a-thompson/Roughrax}{Roughrax}
        & JAX
        & Euclidean and manifold log-ODE
        & \texttt{Apache-2.0} \\
        \href{https://github.com/luke-a-thompson/RoughBench}{RoughBench}
        & JAX
        & Rough volatility (e.g., rBergomi) and SPD diffusion benchmarks
        & \texttt{Apache-2.0} \\
        \bottomrule
    \end{tabular}
\end{table}

\paragraph{Hardware details} All model training and evaluation was performed on an NVIDIA RTX 5080 (16GB) with CUDA 13.1, an AMD Ryzen 9 9950X3D processor running on Ubuntu 24.04 with 64GB of system memory.

\subsection{Rough Volatility}
\label{appx:exp_det_rough_volatility}
\Gls{rbergomi} is traditionally formulated as a system of coupled Volterra \glspl{sde} with rough kernels:
\begin{equation*}
    \begin{aligned}
        S_t &= S_0 + \int_0^t S_s \sqrt{v_s}\,dB_s,\\
        v_t &= \xi_0(t)\exp\left(\frac{\eta}{\Gamma\bigl(H+\tfrac12\bigr)}
        \int_0^t (t-s)^{H-\tfrac12}\,dW_s
        -\frac{\eta^2}{2\,\Gamma\bigl(H+\tfrac12\bigr)^2}
        \int_0^t (t-s)^{2H-1}\,ds\right).
    \end{aligned}
\end{equation*}
Here, \(S_t\) denotes the underlying asset price, \(v_t\) denotes the instantaneous variance process, \(\eta\) is the vol-of-vol, and \(H\ \in (0, \frac{1}{2})\) is the Hurst exponent controlling volatility roughness. The forward variance curve \(\xi_0(t)\) satisfies \(\mathbb{E}[v_t] = \xi_0(t)\) and the correlation of the two drivers \(W_t, B_t\) is given by \(\operatorname{Corr}(dW_t, dB_t) = \rho \in [-1, 1]\).

To simulate our \gls{rbergomi} sample paths, we cast the model into the \gls{rde} framework of \citep{bonesini_rough_2024},
\begin{equation}
    \begin{aligned}
        \label{eq:rbergomi_rde_form}
        S_t &= S_0 + \int_0^t S_u \sqrt{v_0} \exp\left\{\nu V_u - \frac{\nu^2 u^{2H}}{2\Gamma(H+\frac{1}{2})^2}\right\} d\mathbf{W}_u
        - \int_0^t \frac{1}{2} S_u v_0 \exp\left\{2\nu V_u - \frac{\nu^2 u^{2H}}{\Gamma(H+\frac{1}{2})^2}\right\} du \\
        V_t &= \frac{1}{\Gamma(H+\frac{1}{2})} \int_0^t (t-u)^{H-\frac{1}{2}} \left( \rho dW_u + \sqrt{1-\rho^2} dB_u \right),
    \end{aligned}
\end{equation}
where \(\mathbf{W}_t\) is the signature of \(W_t\) and \(V_t\) is a Riemann-Liouville volatility process computed following \cite{bennedsen_hybrid_2017}. We then solve \cref{eq:rbergomi_rde_form} as a Wong-Zakai \gls{ode} driven by a two-dimensional lead-lag path \((W_t, V_t)\), yielding convergence to the It\^o solution. We use the calibration results of \citet{callum_rough_2023} and set \(v_0 = 0.04,\ \nu = 1.991,\ H=0.25,\ \rho = -0.848\) which shows a \gls{mse} of \(3.73\times10^{-5}\) of model-generated implied volatilities over SPX compared to market implied volatilities on 30/05/2022. We employ the hybrid scheme of \citet{bennedsen_hybrid_2017} to generate the Riemann-Liouville fractional Brownian driver. 

\label{appx:exp_det_motion_dynamics}
\subsection{Sim-to-Real Dynamics Forecasting}
In this section, we describe our implementation of the sim-to-real dynamics forecasting experiment developed by \citet{bastian_continuous-time_2025}.

We begin by generating the \texttt{FREE\_ROTATION} synthetic dataset available at their \href{https://github.com/bastianlb/forecasting-rotational-dynamics/tree/main}{official repository}. We then convert to \texttt{.npy} and convert the (quaternion-last) data into a timeseries of $\mathrm{SO}(3)$ rotation matrices and form stride-one length-twenty sliding windows across the time series and flatten the $\mathbb{R}^{3 \times 3}$ matrices into $\mathbb{R}^9$ vectors for feeding into the models. This produces a dataset of shape $\mathbb{R}^{\text{batch} \cdot \text{damping} \times \text{windows} \cdot 20 \times 9}$.

Each window is divided into two segments, $\texttt{pred} \in \mathbb{R}^{12 \times 9}$, and $\texttt{recon} \in \mathbb{R}^{8\times9}$, the ground truth $\texttt{recon}$ is discarded, and an extrapolator fit to \texttt{pred} produces a new $\texttt{recon}$ which is concatenated with the ground truth $\texttt{pred}$. We list the extrapolators here:
\begin{enumerate}
    \item $\mathrm{SO}(3)$-\gls{ncde}: Hermite polynomial
    \item $\mathrm{SO}(3)$-GRU: Hermite polynomial
    \item SG-\gls{ncde}: Weighted Savitsky-Golay polynomial
    \item \Gls{bnrde}: \gls{mlp}
\end{enumerate}

To save parameters in the readout \gls{mlp}, all models output vectors in $\mathbb{R}^6$, which are transformed back into $\mathbb{R}^{3 \times 3}$ rotation matrices via a Gram-Schmidt procedure \cite{zhou_continuity_2020}.

\subsection{Mean-reverting Dynamics over the SPD Manifold under Affine-invariant Geometry}
We simulate \cref{eq:affine_invariant_diffusion} on $\mathbb S^{++}_d$ with $d=3$ over horizon $T=1$ on a uniform grid of $N=1025$ time points, with initial condition $X_0 = I_3$.
We set the (matrix-valued) parameter
\[
\Sigma =
\begin{bmatrix}
0.18 & 0.04 & -0.02\\
0.02 & 0.14 & 0.06\\
0.00 & 0.08 & 0.16
\end{bmatrix},
\qquad
A =
\begin{bmatrix}
0.00 & 0.20 & 0.00\\
-0.10 & 0.00 & 0.10\\
0.00 & -0.15 & 0.00
\end{bmatrix},
\qquad
\gamma = 0.25,
\qquad
\varepsilon = 0.1,
\qquad
\sigma = 0.4.
\]
Define
\[
Q := \Sigma^\top \Sigma,\qquad
M := b := (d-1)Q + \varepsilon I_d,\qquad
H := -\gamma I_d + A.
\]
For the affine-invariant mean reversion we use the (PSD) stiffness
\[
K := \Pi_{\mathbb S^+_d}\bigl(\mathrm{sym}(-H)\bigr),
\qquad
\eta := \tfrac{1}{d}\mathrm{tr}(K),
\]
where $\Pi_{\mathbb S^+_d}$ denotes eigenvalue clipping to $\mathbb S^+_d$.
We take the symmetric Brownian driver $B_t\in\mathrm{Sym}(d)$ with covariation
\[
\mathrm d\langle B_{ab},B_{cd}\rangle_t
=\tfrac12\bigl(\delta_{ac}\delta_{bd}+\delta_{ad}\delta_{bc}\bigr)\,\mathrm dt,
\]
(corresponding to independent diagonal entries and off-diagonals scaled by $1/\sqrt{2}$), and use the identity correlation in the symmetric degrees of freedom.

\paragraph{Analytic quadratic variation.}
For the It\^o diffusion
\[
\mathrm dX_t
=
\eta\,\log_{X_t}(M)\,\mathrm dt
+
\sigma\,X_t^{1/2}\,\mathrm dB_t\,X_t^{1/2},
\]
the entrywise quadratic covariation of $X_t$ is
\[
\mathrm d\langle X_{ij},X_{mn}\rangle_t
=
\frac{\sigma^2}{2}\Bigl(
X_{im}X_{jn}+X_{in}X_{jm}
\Bigr)\,\mathrm dt,
\qquad 1\le i,j,m,n\le d.
\]
Writing $x_t=\mathrm{vech}(X_t)\in\mathbb R^{q}$ with $q=\frac{d(d+1)}{2}$, the corresponding quadratic variation is obtained by the linear projection
\[
\mathrm d\langle x\rangle_t
=
E\,\mathrm d\langle \mathrm{vec}(X)\rangle_t\,E^\top,
\]
where $E\in\mathbb R^{q\times d^2}$ selects the lower-triangular coordinates consistent with our $\mathrm{vech}$ convention.
In the generated dataset we store per-step increments
$\Delta\langle x\rangle_k \approx (t_{k+1}-t_k)\,\bigl(\mathrm d\langle x\rangle_t/\mathrm dt\bigr)\big|_{t=t_k}$.

\subsection{Numerical Integration}
NCDE-type baselines are solved with Tsit5 \cite{tsitouras_rungekutta_2011} using automatic initial stepsizing as described in \citet[Sec.~2.4]{hairer_solving_2008} and PID step size control controller following \citet{soderlind_automatic_2002} and \citet[Sec.~4.2]{hairer_solving_2002}. Euclidean signature-based models, including log-NCDE, NRDE, and B-NRDE, in the rBergomi task use Heun with a fixed step size. Manifold B-NRDE uses the $\mathrm{CF\text{-}EES}(2,5)$ commutator-free integrator. Backpropagation through the solver proceeds by the discretise-then-optimise method \cite{ma_comparison_2021}.

\subsection{Hyperparameters}
\label{appx:hyperparameters}

All models are trained for 100 epochs with batch size 1024, gradient clipping at 1.0, and a 0.8/0.1/0.1 train/validation/test split. We use Muon with learning rate $5 \times 10^{-4}$, $(\beta_1,\beta_2)=(0.9,0.999)$, weight decay $1 \times 10^{-6}$, and $\epsilon = 1 \times 10^{-8}$. PID tolerances $\mathrm{rtol} = 1 \times 10^{-3}$ and $\mathrm{atol} = 1 \times 10^{-3}$, with minimum step size $1 \times 10^{-4}$. Unless otherwise stated, models use initial-condition and vector-field MLPs of width 32, depth 2, and latent state dimension 32. Signature depth is always taken at $N=2$. Hyperparameters differing between models are listed in \cref{tab:atom_hparams_diff}.
\begin{table}[H]
\centering
\caption{Hyperparameters that vary across experiments/models.}
\label{tab:atom_hparams_diff}
\setlength{\tabcolsep}{5pt}
\renewcommand{\arraystretch}{1.15}
\resizebox{\linewidth}{!}{
\begin{tblr}{
  colspec = {l l c c c c c c c c},
  row{1}  = {font=\bfseries, c},
  row{2}  = {font=\bfseries, c},
  cell{1}{1} = {r=2}{c},
  cell{1}{2} = {r=2}{c},
  cell{1}{3} = {c=8}{c},
  vline{3} = {-}{},
  hline{1,Z}       = {-}{},        
  hline{2}         = {3-10}{},     
  hline{3,9,16}    = {-}{},        
}
Experiment & Hyperparameter & Model &  &  &  &  &  &  &  \\
           &                & GRU & xLSTM & Stacked xLSTM & \gls{ncde} & \gls{ncde}++ & log-\gls{ncde} & \gls{nrde} & B-NRDE \\
\SetCell[r=6]{c} {Rough\\Bergomi}
  & Activation              & --- & --- & --- & Softplus  & Softplus  & Softplus    & Softplus    & LipSwish \\
  & Final activation        & --- & --- & --- & Identity  & $\tanh$   & $\tanh$     & $\tanh$     & $\tanh$ \\
  & Control interpolation   & --- & --- & --- & Linear    & Hermite   & Linear      & Linear      & Linear \\
  & Signature window        & --- & --- & --- & ---       & ---       & 16          & 16          & 16 \\
  & Hopf algebra            & --- & --- & --- & ---       & ---       & $\hopfshuf$ & $\hopfshuf$ & $\hopfgl$ \\
  & Signature kernel        & --- & --- & --- & Geometric & Geometric & Geometric   & Geometric   & Branched \\
\SetCell[r=7]{c} {$\mathrm{SO}(3)$\\Dynamics}
  & Activation              & Softplus & --- & --- & Softplus & Softplus & Softplus    & Softplus    & LipSwish \\
  & Final activation        & ---      & --- & --- & ---      & ---      & ---         & ---         & --- \\
  & Control interpolation   & ---      & --- & --- & Hermite  & Hermite  & Hermite     & Hermite     & --- \\
  & Signature window        & ---      & --- & --- & ---      & ---      & 4           & 4           & 4 \\
  & Hopf algebra            & ---      & --- & --- & ---      & ---      & $\hopfshuf$ & $\hopfshuf$ & $\hopfshuf$ \\
  & Signature kernel        & ---      & --- & --- & Geometric & Geometric & Geometric  & Geometric   & Branched \\
  & Extrapolator            & ---      & --- & --- & Hermite  & Hermite, \gls{sg} & Hermite & Hermite & \gls{mlp} \\
\SetCell[r=6]{c} {\gls{spdman}\\Dynamics}
  & Activation              & --- & --- & --- & ---       & ---       & ---         & ---         & --- \\
  & Final activation        & --- & --- & --- & ---       & ---       & ---         & ---         & --- \\
  & Control interpolation   & --- & --- & --- & ---       & ---       & ---         & ---         & --- \\
  & Signature window        & --- & --- & --- & ---       & ---       & 64          & 64          & 64 \\
  & Hopf algebra            & --- & --- & --- & ---       & ---       & $\hopfshuf$ & $\hopfshuf$ & $\hopfmkw$ \\
  & Signature kernel        & --- & --- & --- & ---       & ---       & Geometric   & Geometric   & Branched \\
\end{tblr}
}
\end{table}


\end{document}